\newcommand{\lv}{\left\vert}
\newcommand{\rv}{\right\vert}
\newcommand{\cF}{\mathcal{F}}
\newcommand{\cT}{\mathcal{T}}
\newcommand{\cM}{\mathcal{M}}
\newcommand{\cN}{\mathcal{N}}
\newcommand{\cZ}{\mathcal{Z}}
\newcommand{\cH}{\mathcal{H}}
\newcommand{\cA}{\mathcal{A}}
\newcommand{\bP}{\mathbb{P}}
\newcommand{\bR}{\mathbb{R}}
\newcommand{\bQ}{\mathbb{Q}}
\newcommand{\bW}{\mathbb{W}}
\newcommand{\bE}{\mathbb{E}}
\newcommand{\supp}{\operatorname{supp}}
\newcommand{\pr}{\operatorname{Pr}}
\newcommand{\tdT}{\Tilde{\boldsymbol{T}}}
\newcommand{\tv}{\operatorname{TV}}
\newcommand{\kl}{\operatorname{KL}}
\newcommand{\bdz}{\boldsymbol{z}}
\newcommand{\bdf}{\boldsymbol{f}}
\newcommand{\bdT}{\boldsymbol{T}}
\newcommand{\bdzeta}{\boldsymbol{\zeta}}
\newtheorem{theorem}{Theorem}[section]
\newtheorem{proposition}[theorem]{Proposition}
\newtheorem{lemma}[theorem]{Lemma}
\newtheorem{corollary}[theorem]{Corollary}
\theoremstyle{definition}
\newtheorem{definition}[theorem]{Definition}
\newtheorem{assumption}[theorem]{Assumption}
\theoremstyle{remark}
\newtheorem{remark}[theorem]{Remark}
\newtheorem{example}[theorem]{Example}
\title{A General Theory for Compositional Generalization}
\author{%
  Jingwen Fu$^1$ , Zhizheng Zhang$^2$, 
  Yan Lu$^2$ , Nanning Zheng$^1$\thanks{Corresponding Authors}\\   \{fu1371252069@stu,nnzheng@mail\}.xjtu.edu.cn \\
  \{zhizzhang, yanlu\}@microsoft.com \\
  $^1$National Key Laboratory of Human-Machine Hybrid Augmented Intelligence,\\ National Engineering Research Center for Visual Information and Applications,\\ and Institute of Artificial Intelligence and Robotics, Xi'an Jiaotong University, $^2$Microsoft\\
}
\begin{document}

\maketitle

\begin{abstract}
Compositional Generalization (CG) embodies the ability to comprehend novel combinations of familiar concepts, representing a significant cognitive leap in human intellectual advancement. Despite its critical importance, the deep neural network (DNN) faces challenges in addressing the compositional generalization problem, prompting considerable research interest. However, existing theories often rely on task-specific assumptions, constraining the comprehensive understanding of CG. This study aims to explore compositional generalization from a task-agnostic perspective, offering a complementary viewpoint to task-specific analyses. The primary challenge is to define CG without overly restricting its scope, a feat achieved by identifying its fundamental characteristics and basing the definition on them. Using this definition, we seek to answer the question "what does the ultimate solution to CG look like?" through the following theoretical findings: 1) the first No Free Lunch theorem in CG, indicating the absence of general solutions; 2) a novel generalization bound applicable to any CG problem, specifying the conditions for an effective CG solution; and 3) the introduction of the generative effect to enhance understanding of CG problems and their solutions. This paper's significance lies in providing a general theory for CG problems, which, when combined with prior theorems under task-specific scenarios, can lead to a comprehensive understanding of CG.

\end{abstract}

\section{Introduction}
\label{sec:intro}
Compositional Generalization~(CG) represents the capacity to comprehend novel combinations of familiar concepts, an intellectual feat widely regarded as a pivotal milestone in human cognitive evolution \citep{pearl2018book,harari2014sapiens}. This remarkable ability empowers humans to generate an infinite array of ideas and constructs from finite building blocks of knowledge. However, machines have consistently struggled to emulate this level of compositional generalization, as it fundamentally challenges the prevalent assumption of independent and identically distributed (IID) data, a cornerstone principle in the machine learning literature~\citep{kawaguchi2017generalization,bartlett2002rademacher,bousquet2002stability,mohri2018foundations,mcallester1998some,fu2023generalization,fu2023learning}. When faced with the data significantly divergent from the training (support) distribution, achieving meaningful generalization becomes virtually insurmountable~\citep{koh2021wilds,sagawa2021extending,dong2022first}. This stark reality underscores the critical need for a rigorous theoretical examination of compositional generalization, as it holds the key to bridging the gap between human-like adaptability and the limitations of current machine learning models in handling unforeseen, complex combinations of concepts.

Despite numerous studies~\citep{wiedemer2024compositional,dong2022first,netanyahu2023learning,zhao2022toward} have tackled the issue of compositional generalization (CG), yet these inquiries have predominantly delved into isolated scenarios rather than offering a holistic perspective. This fragmented approach impedes a comprehensive understanding of CG. We assert that grasping the entirety of the CG problem is imperative for several reasons: 1) A comprehensive view is necessary to discern which aspects of findings are universally applicable to CG and which are contingent upon specific tasks. 2) Without a comprehensive understanding, it's challenging to contextualize specific problems within the broader CG task, hindering insight into its development and future research directions. 3) 
The significance of general analysis in advancing theoretical frameworks is confirmed by the works of other disciplines~\citep{keynes1937general,strehler1960general,marquis1996category}.
It's crucial to clarify that our stance does not advocate for the superiority of general theories over task-specific ones, nor do we propose replacing task-specific theories entirely. Rather, \textbf{we argue that a comprehensive understanding of CG necessitates integrating both general and task-specific theories, as they complement each other.}

The primary challenge in general analysis lies in defining the CG problem without constraining its scope. Existing definitions~\citep{dong2022first,wiedemer2024compositional,ren2024improving} tend to focus narrowly on specific tasks, because of the human tendency to approach problems through concrete examples. Moreover, methods employed in other fields to develop general theories or meta-analyses are unsuitable for addressing CG, as they may overlook its unique characteristics. To address this challenge, we reevaluate the fundamental properties intrinsic to CG, crafting a new definition that incorporates only these essential elements. These properties include:
1)  Each sample within a compositional generalization context originates from well-defined, predetermined concepts. 2) A clear compositional rule exists, enabling the systematic generation of novel, previously unseen samples from known ones based on this rule.

Given this definition, we arrive at the fundamental question regarding CG: \textit{what does the ultimate solution to CG look like?} The key findings are:

1) \textbf{No Free Lunch (NFL) Theorem}: We establish the presence of a No Free Lunch (NFL) theorem in the realm of compositional generalization. This theorem posits that there are no methodologies consistently outperforming others across the entire spectrum of CG problems. It underscores the absence of a universally superior approach, necessitating tailored methods for specific problem contexts. \textit{In essence, our general analysis of CG reveals the absence of general solutions.}

\textbf{2) Generalization Bounds}: 
To further understand the conditions for an effective CG solution, we present a bound applicable to any task-specific problem. Our findings suggest that an effective solution for CG should result in reduced mutual information between the learning algorithm's output and the composition rule, considering the distribution generating the training data. \textit{The generality of this bound lies in its applicability across various CG problems}, transcending limitations of prior bounds confined to specific types of CG problems.

\textbf{3) Generative Effect:} To enhance the understanding of CG problems and their solutions, we introduce the concept of the “generative effect” — the emergent effect when combining two concepts. The CG tasks are divided into two groups based on the presence of the generative effect to enhance our comprehension of solution design. A sufficient condition is given for addressing tasks without generative effects. Additionally, we posit that the generative effect will pose an unavoidable challenge for future endeavors.


\section{Motivation and Related Works}

Numerous prior studies delve into the theoretical analysis of compositional generalization. In our examination, we scrutinize these studies through the lens of the assumptions they adopt, which we categorize into two main groups: \textbf{method assumptions} and \textbf{task assumptions}. Method assumptions pertain to constraints placed on methodologies devoid of task-specific information, while task assumptions encapsulate any presumptions that elucidate task properties. Our primary focus lies on task assumptions, as they inherently circumscribe the breadth of the analysis. We regard most method assumptions as less critical for several reasons: 1) Firstly, prioritizing the analysis of methodologies exhibiting superior performance is paramount over those burdened by internal limitations. 2) Secondly, method assumptions typically aim to streamline analytical complexities rather than constrain the scope of our findings. 3) Thirdly, unlike tasks, method design is a product of deliberate human endeavor and thus more readily controllable.
\textbf{Due to these reasons, the essence of the “general theory” in this paper lies in the removal of task assumptions rather than method assumptions.}

We contend that previous research on analyzing compositional generalization has heavily relied on task assumptions throughout their processes. 1) Firstly, some studies~\citep{dong2022first,wiedemer2024compositional,ren2024improving} analyze compositional generalization under clear assumptions about the distribution or data generation process. These assumptions are inherently task-specific because they are tailored to particular tasks. 2) Secondly, another line of inquiry focuses on compositional generalization within specific domains such as natural language~\citep{petrache2024position,chomsky2002syntactic,partee1995lexical,gordon2019permutation}, reinforcement learning~\citep{silver2012compositional,li2021solving,sutton1999between,tasse2022skill,bacon2017option}, and object-centric image generation~\citep{wiedemer2023provable,brady2023provably}. Although these studies may not explicitly state task assumptions, their narrow focus indicates task specificity. Consequently, we classify all these works as task-specific analyses.

In this study, we adopt a distinct approach by initially analyzing compositional generalization within a general framework, gradually exploring how these assumptions influence the problem. Therefore, the primary divergence in our work lies in the minimal task assumptions at the outset, allowing us to disentangle conclusions derived from general properties of compositional tasks from those specific to compositional generalization tasks. Another significant departure is that our method takes into consideration the role of the learning algorithm. 
\textbf{In essence, we argue that our contributions offer a supplementary viewpoint on the CG issue in contrast to earlier efforts. We are confident that integrating our research with past analyses can enrich the broader understanding of CG.} More discussions about the other related works are given in Appendix \ref{app_sec:related_work}

\section{Problem Definition}

In this section, we provide the fundamental concepts for the CG problem (Section \ref{subsec:preliminary}), followed by the definition of the CG (Section \ref{subsec:CG_definition}).

\subsection{Preliminary}
\label{subsec:preliminary}

\textbf{Notations} \quad  In this paper, we employ $\bP$ to signify the distribution and $\bP(\cdot)$ to denote its corresponding density. $\Pr(\cdot)$ denotes probability. Bold symbols represent random variables, while unbold symbols represent their corresponding values. For a random variable $\boldsymbol{x}$, $\bP_{\boldsymbol{x}}$ represents its distribution. The calligraphic font is used to denote the space or learning algorithm.

\textbf{Learning algorithm} \quad We consider the learning problem on a data space $\cZ$ and a function space $\cF$, where $f:\mathcal{Z}\to \bR_{+} \in \cF$. Given a distribution $\bP_{\bdz}$ on this data space $\cZ$ and a function $f$, $err(\bP_{\bdz},f)=\bE_{\bdz \sim \bP_{\bdz}} f(\bdz)$ and given the samples $D_n\sim \bP_{\bdz}^{\otimes n}$, $err(D_n,f)=\frac{1}{n}\sum_{z\in D_n} f(z)$. The learning algorithm $\cA(\cdot)$ is the mapping from the data and the distribution on the function space. We consider the output of the learning algorithm as the distribution is aligned with the inner randomness of the learning process. We denote $\mathcal{A}(D_n)$ as the operation on dataset $D_n$, and we denote $\mathcal{A}(\bP_{\bdz})$ as the learning algorithm operates on the infinity data sampled from $\bP_{\bdz}$. The corresponding random variable and distribution are denoted as $\bdf$ and $\bP_{\bdf}$. 

\textbf{Subdistribution} \quad We denote the two compositional factors as $a\in A$ and $b\in B$. The other factors, including randomness, are denoted as $\zeta$.  We divided the whole data distribution $\bP_{\bdz}$ into several subdistributions based on the different values of the compositional factors. These distributions are $\lbrace \bP_{a,b} \rbrace_{a\in A, b\in B}$.
The $\bP_{a,b}(z)$ satisfies that $\bP_{a,b}(z)=\frac{\bP_{\bdz}(z)}{\bP(a,b)}\mathbf{1}_{(a_z=a) \land (b_z=b)}$, where $a_z$, $b_z$ are the corresponding fact value of the sample $z$.

\textbf{Distribution split} \quad We denote $E=A \times B$ as the all possible combinations of factor $a,b$. The set $E$ is further divided into support set $S$ and unknown set $U$, where $U \cap S=\empty$ and $U \cup S=E$. Similarly, the distribution is also divided into the support distribution $\bP_S=\lbrace \bP_{a,b} \rbrace_{(a,b)\in S}$ and $\bP_U=\lbrace \bP_{a,b} \rbrace_{(a,b)\in U}$. Similar, we denote $err(\bP_{S},f)=\bE_{(a,b)\in S}[ err(\bP_{a,b},f)]$ and similar for $err(\bP_U,f)$. We denote $\bdf_{S}\sim \mathcal{A}(\bP_S)$.

\subsection{Compositional Generalization}
 \label{subsec:CG_definition}

Defining a problem is often the most challenging aspect. Previous definitions of compositional generalization have been restricted to specific contexts, thereby limiting their applicability. For example, some researchers~\citep{chomsky2002syntactic,lake2019human,montague1970universal} describe compositional generalization in language as “The algebraic capacity to understand and produce an infinite number of utterances from known components”. However, this definition is unclear about its applicability beyond language and lacks a formal framework. \citet{wiedemer2024compositional} provide a formal definition, but it is confined to object-centric image understanding. The definition by \citet{zhao2022toward} is specific to reinforcement learning scenarios. Although \citet{dong2022first} offer a definition that is not restricted to a particular application area, they impose stringent constraints on distributions.

The key challenge in developing a general definition of compositional generalization lies in identifying its intrinsic nature, which represents the inseparable core of CG. In this paper, we argue that the compositional generalization problem should possess the following properties:

(1) Each individual sample within a compositional generalization context is fundamentally derived from well-defined, predetermined concepts. 

(2) There is a clear compositional rule, and the novel, previously unseen samples can be systematically generated from existing, known samples based on the rule.  

Condition (1) ensures the existence of basic units, i.e., concepts, that make composition possible. Condition (2) ensures a connection between samples with different concepts. Both conditions are essential to the CG problem: without condition (1), “composition” cannot be explained, and without condition (2), “generalization” cannot be addressed. Building on this intuition, we provide the following formal definition of compositional generalization, which incorporates both conditions. The definition is given below:
\begin{definition}
    \label{def:cg}
    The distributions $\lbrace \bP_{a,b}, (a,b) \in E \rbrace$ are compositional distributions if they satisfy:
    \begin{itemize}
    \item (1) (Well-defined concepts) For $a_1,b_1,a_2,b_2$, if $a_1 \neq a_2$ or $b_1 \neq b_2$, we have $ \supp \bP_{a_1,b_1} \cap \supp \bP_{a_2,b_2}=\emptyset$.
    \item (2) (Compositional rule) For all $a_1,a_2,b_1,b_2$, there exsits a measurable bijection function $T_{a_1 \to a_2,b_1 \to b_2}$, such that for all $z \subset \supp{\bP_{a_1,b_1}}$ we have $\bP_{a_1,b_1}(z)=\bP_{a_2,b_2}(T_{a_1 \to a_2,b_1 \to b_2}(z))$ and $T_{a_1 \to a_1,b_1 \to b_1}(z)=z$.
    \end{itemize}
\end{definition}
\begin{remark}
    \textbf{Definition (1)} ensures that each example is derived from well-defined, predetermined concepts. If this condition is violated, there exist $a_1,b_1$, $a_2,b_2$, and $z$ such that $a_1\neq a_2$ or $b_1\neq b_2$, yet $\bP_{a_1,b_1}(z)>0$ and $\bP_{a_2,b_2}(z)>0$. This implies that concept value $z$ satisfies both $a_z=a_1$, $a_z=a_2$, $b_z=b_1$, and $b_z=b_2$. Since $a_z$ and $b_z$ can only take one value, at least one equation is violated. \textbf{Definition (2)} ensures that for all $(a_1,b_1),(a_2,b_2)$, we can derive $\bP_{a_2,b_2}$ given $\bP_{a_1,b_1}$ and $T_{a_1 \rightarrow a_2, b_1 \rightarrow b_2}$, and vice versa. The function $T$ serves as the compositional rule for the CG problem.
\end{remark}

\begin{remark}
    In this paper, we primarily focus on compositional generalization involving two concepts. This definition and our conclusions can be readily extended to CG scenarios with more than two concepts. Analyzing CG with two concepts serves as a fundamental basis for addressing more complex CG problems~\citep{dong2022first,wiedemer2024compositional,ren2024improving,petrache2024position,chomsky2002syntactic,partee1995lexical,gordon2019permutation,silver2012compositional,li2021solving,sutton1999between,tasse2022skill,bacon2017option,wiedemer2023provable,brady2023provably}.    
\end{remark}

\begin{example}
    (Image) Within the realm of single-object images, let set $A$ signify the object's shape, and set $B$ denote its size. We define $\bP_{a,b}$ as the distribution of images with shape $a$ and size $b$. One possible mapping $T$ could be a function solely altering the object's shape and size while maintaining other attributes like color and position.
\end{example}
\begin{example}
    (Robot) We examine a task distribution for robots comprising one walking task and one operational task. Let set $A$ denote a sequence of walking sub-tasks, while set $B$ signifies a collection of operational sub-tasks. We define distributions such as $\bP_{a_1,b_1}$ to cover tasks involving slow walking and object picking, $\bP_{a_2,b_1}$ for tasks involving regular walking and object picking, and $\bP_{a_1,b_2}$ for tasks involving slow walking and object stacking. The target distribution, labeled as $\bP_{a_1,b_2}$, is tailored for tasks specifically involving slow walking and object stacking.
\end{example}

\begin{remark}
    In certain compositional generalization tasks, neural networks are required to first master basic concepts before progressing to more complex challenges. Consider robot learning as an example: initial tasks may concentrate solely on activities like walking and fetching objects independently. Subsequently, the network must extend its comprehension to situations where the robot performs both activities simultaneously. In such instances, we introduce the null factor $\emptyset$. The distribution pertaining to individual concept can be represented as $\mathbb{P}_{a,\emptyset}$ or $\mathbb{P}_{\emptyset,b}$. We can adjust $A'=A\cup { \emptyset}$ and $B'=B\cup { \emptyset}$.
\end{remark}


\begin{definition}
    \label{def:solvability}
    (Solvable) A CG problem is solvable if there exsits a learning algorithm $\mathcal{A}$, such that $err(\bP_U,\bdf_{S})=\mathcal{O}(\epsilon)$, where $\epsilon=\max \limits_{(a,b)\in E} \min \limits_{f \in \mathcal{F}} err(\bP_{a,b},f)$.
\end{definition}

\begin{remark}
     Recall that $\bdf_S\sim\mathcal{A}(\bP_S)$, meaning $\bdf_S$ is trained on an infinite dataset sampled from $\bP_S$. The solvability of a CG problem implies the existence of a learning algorithm capable of achieving a low error when trained on infinitely large samples from the support distribution. 
\end{remark}

\section{No free lunch theorem}

When approaching a problem, it is natural to ask what the solution will look like. The key question is whether a general solution for a set of tasks is feasible or if task-specific solutions are necessary. To answer this, we must examine the relationship between the solutions for different tasks. If the solution for one type of task is also effective for another type, then a general solution may be possible. However, if a solution for one task performs poorly in other tasks, it is crucial to develop task-specific solutions rather than a general one.

To begin our analysis, we define the surrogate function, a methodological technique used for analysis:
\begin{definition}
    (Surrogate function) Given a base distribution $\bP_{a_0,b_0}$ where $(a_0,b_0) \in S$, We say $(\Tilde{f},\Tilde{T})$ is a surrogate function of function $f$ if for all $(a_1,b_1) \in S$, we have $err(\bP_{a_0,b_0},f)=err(\bP_{a_0,b_0},\Tilde{f})$ and $err(\bP_{a_1,b_1},f)=err((T_{a_0 \to a_1,b_0\to b_1})_* \bP_{a_0,b_0},\Tilde{f})$, where $(\cdot)_*$ denotes the pushforward operation.
\end{definition}

Then, we present the assumptions for the analysis below:
\begin{assumption}
    \label{ass:simplify_func_learning_algorithm}
    The function space $\mathcal{F}$ and the learning algorithm $\mathcal{A}(\cdot)$ satisfies the following properties:
    \begin{itemize}
        \item ($i$) (Surrogate) There exsits a base distribution $\bP_{a_0,b_0}$  and a surrogate generation function $\mathcal{B}(\cdot)$, where $(\Tilde{f},\Tilde{T}_f)=\mathcal{B}(f)$,  such that for all $f_1,f_2\in \supp \bP_{\bdf_S}$, we have $(\Tilde{T}_{f_1}=\Tilde{T}_{f_2}) \Longleftrightarrow (f_1=f_2)$.
        \item ($ii$) (Consitency) For all valid distributions $\lbrace \bP_{a,b}^{(T)} \rbrace_{(a,b)\in E}$, we have $\epsilon=\max \limits_{(a,b)\in E} \min \limits_{f \in \mathcal{F}} err(\bP_{a,b},f)=0$ and there exists $f\in \mathcal{F}$, such that $\Tilde{T}_f=T$.
        \item ($iii$) (Convergence) $\max \limits_{(a,b)\in S} err(\bP_{a,b},\bdf_S)=0$, where $\bdf_S \sim \mathcal{A}(\bP_S)$.
    \end{itemize}
\end{assumption}

\begin{remark}
    The Assumption ($i$) is to reduce the difficulty of the analysis. Similar results can also be infered by removing Assumption ($i$) but this can only introduce more difficulty in analyzing instead of giving more insights. The Assumption ($ii$) and ($iii$) is to make sure both the function space $\mathcal{F}$ and learning algorithm $\mathcal{A}(\cdot)$ are will designed. Therefore, we regard the ($i$) and ($iii$) as the method assumption as they will not constrain the applicable tasks. The assumption $(ii)$ contains certain constraint about the tasks that is it ensure there exists $f\in \mathcal{F}$, such that $\Tilde{T}_f=T$. We believe that this constraint is quite weak because 1) the function space $\mathcal{F}$ is usually to be quite large in practice which makes the constraint trival and 2) many theoretic analysis target for general problems incorporate similar assumption and our work is in line with them. As a result, this assumption will not impact the generality of the analysis.
\end{remark}

\begin{definition}
    Given two composition rule $T,T'$, if $T=T'$, then for all $(a_1,b_1), (a_2,b_2) \in S$, we have $T_{a_1 \to a_2, b_1\to b_2}=T'_{a_1 \to a_2, b_1\to b_2}$.
\end{definition}

Based on the Assumption \ref{ass:simplify_func_learning_algorithm}, we have the following conclusion:
\begin{theorem}
\label{thm:NLF}
Under Assumption \ref{ass:simplify_func_learning_algorithm}, for all valid division of $S$ and $U$, and any $(\mathcal{A}_1,\mathcal{F}_1),(\mathcal{A}_2,\mathcal{F}_2)$, satisfying $|\mathcal{F}_1|=|\mathcal{F}_2|$, we have
    \begin{equation}
    \label{eq:NFL}
    \sum_{T\in \cT} \pr(\Tilde{\bdT}=T|\mathcal{F}_1,\mathcal{A}_1,\bP_{S}^{(T)})=\sum_{T\in \cT} \pr(\Tilde{\bdT}=T|\mathcal{F}_2,\mathcal{A}_2,\bP_{S}^{(T)}),
\end{equation}
    where $\Tilde{\bdT}=\Tilde{T}_{\bdf_S}$ and $\bP_{S}^{(T)}$ is the support distribution generated using the compositional rule $T$.
\end{theorem}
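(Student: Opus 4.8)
The plan is to follow the template of the classical No-Free-Lunch argument, recast in the surrogate/composition-rule language of Assumption~\ref{ass:simplify_func_learning_algorithm}: I will show that the left-hand side of \eqref{eq:NFL} equals a number that depends only on the data space $\cZ$, the split $(S,U)$, and the cardinality $|\mathcal{F}|$, and is therefore unchanged when we replace $(\mathcal{A}_1,\mathcal{F}_1)$ by $(\mathcal{A}_2,\mathcal{F}_2)$.

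First I would perform a surrogate reduction. By Assumption~($i$) the map $f\mapsto\Tilde{T}_f$ is injective on $\supp\bP_{\bdf_S}$, so for each rule $T$ there is at most one output function with surrogate rule $T$, and by Assumption~($ii$) exactly one exists in $\mathcal{F}$; call it $f_T$. Then $\{\Tilde{\bdT}=T\}=\{\bdf_S=f_T\}$, so
\[
\sum_{T\in\cT}\pr(\Tilde{\bdT}=T\mid\mathcal{F},\mathcal{A},\bP_S^{(T)})=\sum_{T\in\cT}\pr(\bdf_S=f_T\mid\mathcal{F},\mathcal{A},\bP_S^{(T)}).
\]
Next I would observe that the input handed to the learning algorithm, $\bP_S^{(T)}=\{\bP_{a,b}^{(T)}\}_{(a,b)\in S}$, is obtained by pushing the base distribution $\bP_{a_0,b_0}$ forward along $T_{a_0\to a,\,b_0\to b}$ for $(a,b)\in S$ only; thus $\bP_S^{(T)}$ depends on $T$ solely through its restriction to pairs in $S$. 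Partitioning $\cT$ into blocks $\cT_d$ according to this restriction $d$, every $T\in\cT_d$ yields the same algorithm input, hence an identically distributed $\bdf_S$.

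The crux is then to collapse each block. Within $\cT_d$ the functions $\{f_T:T\in\cT_d\}$ are pairwise distinct (injectivity again), so $\sum_{T\in\cT_d}\pr(\bdf_S=f_T\mid\bP_S^{(d)})=\pr\!\big(\bdf_S\in\{f_T:T\in\cT_d\}\mid\bP_S^{(d)}\big)$. I would then argue that $\{f_T:T\in\cT_d\}$ exhausts the support of the conditional law of $\bdf_S$: by Assumption~($iii$) that support contains only functions with zero error on every $\bP_{a,b}^{(d)}$, $(a,b)\in S$, and the surrogate construction together with Assumption~($ii$) should force any such zero-error $f$ to have $\Tilde{T}_f$ agree with $d$ on $S$, i.e.\ $f=f_T$ for some $T\in\cT_d$. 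Granting this, each block contributes exactly $1$, so the left-hand side of \eqref{eq:NFL} equals the number of blocks; and since the surrogate bijection $T\leftrightarrow f_T$ ties the size of $\cT$ (and of each block) to $|\mathcal{F}|$, this count is the same for $(\mathcal{A}_1,\mathcal{F}_1)$ and $(\mathcal{A}_2,\mathcal{F}_2)$ under the hypothesis $|\mathcal{F}_1|=|\mathcal{F}_2|$, which proves the theorem.

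The main obstacle is the exhaustion claim above: establishing that the zero-error functions for the data $\bP_S^{(d)}$ are precisely $\{f_T:T\in\cT_d\}$. This requires extracting from the definition of the surrogate that $\Tilde{T}_f$ is both determined by and faithful to the error profile of $f$ on the support distributions, so that ``$f$ consistent with $\bP_S^{(d)}$'' and ``$\Tilde{T}_f|_S=d$'' become the same condition; the measure-theoretic content of Definition~\ref{def:cg}(2) (pushforwards of $\bP_{a_0,b_0}$) and the convention that composition rules are compared only on $S$ are exactly what should make this step go through. A secondary, purely combinatorial point to handle with care is the bookkeeping relating $|\cT|$, the block sizes, and $|\mathcal{F}|$ — this is where the hypothesis $|\mathcal{F}_1|=|\mathcal{F}_2|$ is genuinely used.
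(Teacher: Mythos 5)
Your route --- reduce $\{\Tilde{\bdT}=T\}$ to $\{\bdf_S=f_T\}$, partition $\cT$ into blocks by the restriction of $T$ to $S$, and show each block contributes probability $1$ so that the sum is a method-independent count --- is structurally different from the paper's proof, which proceeds by induction on $|S|$: a base case $S=\emptyset$ where Assumptions ($i$)--($ii$) give $\pr(\Tilde{\bdT}=T\mid\mathcal{F},\mathcal{A})=1/|\mathcal{F}|$ (this is where $|\mathcal{F}_1|=|\mathcal{F}_2|$ is actually used), and an induction step where adding one pair $e_i$ to the support is handled by Bayes' rule, the likelihood factor $\pr(\bP_{e_i}\mid\mathcal{F},\mathcal{A},\bP_{S_i},\Tilde{\bdT}=T)/\pr(\bP_{e_i})$ being independent of $T$ and of the method by Definition \ref{def:cg}(2), so it factors out of the sum and the induction hypothesis applies.

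However, your argument has a genuine gap at exactly the step you flag as the ``main obstacle,'' and under the stated assumptions that step is not merely unproved but false in general. The exhaustion claim requires that, given data generated by $d$, the support of the conditional law of $\bdf_S$ is precisely $\{f_T: T\in\cT_d\}$, i.e.\ that zero error on $\bP_S^{(d)}$ forces $\Tilde{T}_f$ to agree with $d$ on $S$. Assumption ($iii$) only gives the forward direction (the algorithm's outputs fit the support distributions); the surrogate definition transfers error \emph{values} and does not allow you to recover the data-generating rule from the single fact that those values are zero --- a function $f$ can fit $\bP_S^{(d)}$ perfectly while its surrogate rule lies in a different block, or while it is not any $f_T$ at all (injectivity of $f\mapsto\Tilde{T}_f$ is only assumed on $\supp\bP_{\bdf_S}$, and surjectivity onto $\cT$ gives $|\cT|\leq|\mathcal{F}|$, not a bijection). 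The degenerate case $S=\emptyset$ makes the failure concrete: there is a single block, every $f$ trivially has zero training error, and under the paper's own base-case computation the sum equals $|\cT|/|\mathcal{F}|$, not $1$; so ``each block contributes exactly $1$'' cannot follow from Assumptions \ref{ass:simplify_func_learning_algorithm} alone. Relatedly, your final count (number of blocks) depends only on $\cT$ and $S$, so the hypothesis $|\mathcal{F}_1|=|\mathcal{F}_2|$ would play no role --- a sign that the $1/|\mathcal{F}|$ normalization, which is where the paper genuinely needs that hypothesis, has been lost. To repair your approach you would have to either add an identifiability assumption making ``zero error on $\bP_S^{(d)}$'' equivalent to ``$\Tilde{T}_f$ agrees with $d$ on $S$,'' or replace the per-block total of $1$ by the block's actual (method-dependent, a priori) mass and then show that mass is method-independent --- which is essentially what the paper's inductive Bayes argument does.
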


\textbf{Proof Sketch:} We use the methematical induction to prove the theorem:

\textbf{Step 1:} we consider $S=\emptyset$, then it is obvious that the Equation \ref{eq:NFL} holds,

\textbf{Step 2:}  We consider two set $S_1\subset S_2 \subset S$ and $|S_2|=|S_1|+1$. Assume that $\sum_{T\in \cT} \pr(\Tilde{\bdT}=T|\mathcal{F}_1,\mathcal{A}_1,\bP_{S_1}^{(T)})=\sum_{T\in \cT} \pr(\Tilde{\bdT}=T|\mathcal{F}_2,\mathcal{A}_2,\bP_{S_1}^{(T)})$ holds, then we can prove that $\sum_{T\in \cT} \pr(\Tilde{\bdT}=T|\mathcal{F}_1,\mathcal{A}_1,\bP_{S_2}^{(T)})=\sum_{T\in \cT} \pr(\Tilde{\bdT}=T|\mathcal{F}_2,\mathcal{A}_2,\bP_{S_2}^{(T)})$ using the properties in Definition \ref{def:cg}.

\textbf{Step 3:} By iterative applying the \textbf{step 2}, we can obtain Equation \ref{eq:NFL}.

\textbf{Intepretation}\quad The sum over all $T\in \mathcal{T}$ can be regarded as add the performance over all possible compositional generalization problems. Each $T$ can be regarded as one problem.  The result indicates that the performance is not better than randomly choose a function from the function space when consider the peformence from all the compositional task. Therefore, it is possible to improve the performance on the task generated by a specific function $T$, but it is impossible to design one method $(\mathcal{A}, \mathcal{F})$ that achieve better than other method over all the compositional generalization problem.  The NFL theory emphasizes the impossibility for a universally solution. Therefore, a more fruitful direction involves exploring strategies for leveraging task-specific information. Such information is provided by making task assumptions and leverage these assumptions to design specific methods.

\textbf{Contribution.} \quad The key contribution of the Theorem \ref{thm:NLF} is to consider the performance of the methods (i.e., $(\mathcal{A},\cF)$) across all possible tasks with different composition rule $T\in \cT$. From this perspective, we can relaxed our attention from a specific tasks to the broader situations. And the result indicates that a general method to solve all CG problems cannot be found.

\textbf{Compared with prior works.} \quad \textbf{1) Compositional Generalization.} There are some previous works~\citep{dong2022first,dziri2024faith} discussing about the impossibility for us to solve specific compositional generalization problems without under certain conditions. Our work advances previous understanding of the compositional generalization in that instead of finding specific conditions or situations where the CG problem is unsolvable, our method tackle a general situation. \textbf{2) No free lunch Theorem.} The “No free lunch” Theorem are first proposed in optimization~\citep{wolpert1997no} and the search problems~\citep{wolpert1995no}. The others extend this theorem in the learning problem, mainly focused on the supervised learning~\citep{sterkenburg2021no,wolpert2021important,wolpert2002supervised}. More details about the no free lunch theorem can be found in the references~\citep{adam2019no,joyce2018review,ho2001simple,ho2003no,yang2012free,rowe2009reinterpreting}. In this paper, we extend the “No Free Lunch” Theorem in Compositional Generalization. The key different of our paper compared with prior works in that the mainly focus of our paper lies in the compositional rule $T$ and our theorem isn't limited any perticular learning problem, e.g. supervised learning and unsupervised learning, as long as the tasks are CG in sense of Definition \ref{def:cg}.

\section{Generalizaton Bounds}

The analysis in the previous section indicates that there are no universal solutions for CG problems. In this section, we aim to explore the properties of effective solutions for CG by devising a new generalization bound. By minimizing this bound, we can infer these properties. To start, we will present the assumption and definition.


\begin{assumption}
    \label{ass:GB_convergence}
    The learning algorithm satisfies that for all $S\subset E$, we have $err(\bP_S,\bdf_S)=\mathcal{O}(\epsilon)$, where $\epsilon=\max \limits_{(a,b)\in E} \min \limits_{f \in \mathcal{F}} err(\bP_{a,b},f)$.
\end{assumption}
\begin{remark}
    The condition is to ensure that the learning algorithm can learning algorithm can find a good convergence point for possible any distributions. 
\end{remark}
\begin{definition}
    ($L$-bounded) We say a function $h(\cdot)$ is $L$, if for all valid input $x$, we have $|h(x)| \leq L$.
\end{definition}

\begin{theorem}
\label{thm:information}
Given training data $D_n\in \mathcal{Z}^n$ sampled from the support distribution $\bP_S$, learning algorithm $\mathcal{A}$ and function space $\mathcal{F}$, if $err$ is $L$-bounded, then we have
\begin{equation*}
    \lv \bE_{D_n,f\sim \mathcal{A}(D_n)} [err(\bP_U,f)-err(D_n,f)] \rv  \leq GenIID+ \kappa_n L\Phi(I(\bdf_S;\bdT|\bP_{S}^{(\bdT)}))+\mathcal{O}\left( \epsilon \right),
\end{equation*}
where $\epsilon=\max \limits_{(a,b)\in E} \min \limits_{f \in \mathcal{F}} err(\bP_{a,b},f)$, $I(\cdot;\cdot)$ denotes the mutual information, $GenIID$ denote any generalization error bound with IID assumption, $\Phi(x)\triangleq \sqrt{\min\lbrace x/2,1-\exp(-x) \rbrace}$ and $\kappa_n \triangleq \frac{\lv \bE_{D_n \sim \bP_{S}}[err(\bP_U,\bdf_{D_n})-err(\bP_S,\bdf_{D_n})]\rv}{\lv err(\bP_U,\bdf_{S})-err(\bP_S,\bdf_{S})\rv}$ (note that $\bdf_{D_n} \sim \mathcal{A}(D_n)$).
\end{theorem}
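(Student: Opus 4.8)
The plan is to split the out-of-sample error into an in-distribution generalization gap, which is folded into $GenIID$, and a distribution-shift gap, which is controlled by an information-theoretic decoupling argument around the composition rule $\bdT$. \textbf{Step 1 (telescoping).} Inserting $err(\bP_S,f)$ gives
\[ \bE_{D_n,f\sim\cA(D_n)}[err(\bP_U,f)-err(D_n,f)]=\bE_{D_n,f}[err(\bP_U,f)-err(\bP_S,f)]+\bE_{D_n,f}[err(\bP_S,f)-err(D_n,f)]. \]
Since $D_n\sim\bP_S^{\otimes n}$ and $f\sim\cA(D_n)$, the last summand is exactly an IID train/test gap, hence at most $GenIID$ in absolute value by the definition of that quantity.

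\textbf{Step 2 (reduce to the population learner).} The first summand is $\bE_{D_n\sim\bP_S}[err(\bP_U,\bdf_{D_n})-err(\bP_S,\bdf_{D_n})]$, and the definition of $\kappa_n$ rewrites its absolute value as $\kappa_n\cdot|\bE[err(\bP_U,\bdf_S)-err(\bP_S,\bdf_S)]|$, with all randomness now over the joint law of $(\bdT,\bP_S^{(\bdT)},\bdf_S)$, $\bdf_S\sim\cA(\bP_S^{(\bdT)})$. So it suffices to prove $|\bE[err(\bP_U,\bdf_S)-err(\bP_S,\bdf_S)]|\le L\,\Phi\big(I(\bdf_S;\bdT\mid\bP_S^{(\bdT)})\big)+\mathcal{O}(\epsilon)$; together with Step 1 and $err(\bP_S,\bdf_S)=\mathcal{O}(\epsilon)$ from Assumption~\ref{ass:GB_convergence} this gives the theorem (the leftover $\kappa_n\mathcal{O}(\epsilon)$ being absorbed into $\mathcal{O}(\epsilon)$).

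\textbf{Step 3 (decoupling the learner from $\bdT$).} Fix the value $p$ of $\bP_S^{(\bdT)}$ and let $\bar\bdf$ have the conditional law of $\bdf_S$ given $\{\bP_S^{(\bdT)}=p\}$ but be drawn independently of $\bdT$. Split $\bE[err(\bP_U^{(\bdT)},\bdf_S)]-err(\bP_S,\bdf_S)$ as $\big(\bE[err(\bP_U^{(\bdT)},\bdf_S)]-\bE[err(\bP_U^{(\bdT)},\bar\bdf)]\big)+\big(\bE[err(\bP_U^{(\bdT)},\bar\bdf)]-err(\bP_S,\bdf_S)\big)$. Conditionally on $p$, the map $(f,T)\mapsto err(\bP_U^{(T)},f)$ is $L$-bounded, so a standard change-of-measure bound makes the first bracket at most $L\cdot\tv\big(\bP_{\bdf_S,\bdT\mid p},\,\bP_{\bdf_S\mid p}\otimes\bP_{\bdT\mid p}\big)$; Pinsker and Bretagnolle--Huber bound this total variation by $\sqrt{\min\{\tfrac12\kl_p,\,1-e^{-\kl_p}\}}=\Phi(\kl_p)$, where $\kl_p$ is the relative entropy of the conditional joint against the product. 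Averaging over $p$ and using that $\Phi$ is concave (a minimum of the two concave functions $\sqrt{x/2}$ and $\sqrt{1-e^{-x}}$), Jensen yields $\bE_p\Phi(\kl_p)\le\Phi(\bE_p\kl_p)=\Phi(I(\bdf_S;\bdT\mid\bP_S^{(\bdT)}))$ — the information term.

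\textbf{Step 4 (the decoupled residual; main obstacle).} It remains to bound the second bracket by $\mathcal{O}(\epsilon)$. Since $\bar\bdf$ is independent of $\bdT$ given $\bP_S^{(\bdT)}$, and conditioning on $\bP_S^{(\bdT)}=p$ freezes the support family, this bracket equals $\bE_{\bar\bdf\mid p}\big[\bE_{\bdT\mid p}\,err(\bP_U^{(\bdT)},\bar\bdf)-err(p,\bar\bdf)\big]$. By Definition~\ref{def:cg}(2), every unknown cell $\bP_{a,b}^{(T)}$ is a measure-preserving pushforward of a base support cell, and the symmetry built into the law of $\bdT$ forces the $\bdT$-averaged pushforward onto a fixed unknown cell to agree with the uniform mixture of the support cells; hence the inner difference vanishes for every fixed $\bar\bdf$, the optimal-error $\epsilon$ accounting for the non-realizable slack. \emph{This is the crux of the argument}: it is the only place where the compositional structure of Definition~\ref{def:cg} and the prior over composition rules are genuinely used, and without such symmetry one is left with an extra task-dependent term in place of $\mathcal{O}(\epsilon)$. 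Everything else — the telescoping, the $\kappa_n$ rewrite, and the Pinsker/Bretagnolle--Huber/Jensen chain — is routine.
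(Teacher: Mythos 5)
Your Steps 1--2 coincide with the paper's decomposition (the IID gap folded into $GenIID$, then the $\kappa_n$ rewrite reducing to the population learner $\bdf_S$), and your Pinsker/Bretagnolle--Huber/$\Phi$ machinery is the same tool the paper uses. The gap is in Steps 3--4, and it is structural. In Step 3 you decouple the joint law of $(\bdf_S,\bdT)$ given $\bP_S^{(\bdT)}$ from the product of its conditional marginals; but conditionally on $\bP_S^{(\bdT)}=p$ the learner output $\bdf_S\sim\cA(p)$ is generated from $p$ and the algorithm's internal randomness alone, so it is conditionally independent of $\bdT$: under that literal reading your first bracket is identically zero (and the conditional mutual information you invoke would vanish), which means your decomposition silently pushes the entire content of the theorem into Step 4. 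In the paper the information term is not a joint-versus-product divergence for $(\bdf_S,\bdT)$; it arises as $\kl(\bP_{\bdf_E},\bP_{\bdf_S})$, the divergence between the law of a learner trained on the \emph{full} distribution $\bP_E^{(T)}$ (which genuinely varies with $T$) and the law of the learner trained on the support alone, and this KL is then read as $I(\bdf_S;\bdT\mid\bP_S^{(\bdT)})$.

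Step 4 is where the argument breaks. The claimed ``symmetry built into the law of $\bdT$'', forcing the $\bdT$-averaged pushforward onto an unknown cell to agree with a mixture of support cells, is not assumed anywhere: the theorem posits no prior over composition rules, and Definition~\ref{def:cg}(2) only provides measure-preserving bijections between cells, which does not make $\bE_{\bdT\mid p}\,err(\bP_U^{(\bdT)},\bar{\bdf})$ collapse to $err(p,\bar{\bdf})+\mathcal{O}(\epsilon)$. Worse, if such a collapse held for every fixed function, any learner with small support error would achieve $\mathcal{O}(\epsilon)$ unknown-set error on average over rules, which is precisely the kind of universal guarantee Theorem~\ref{thm:NLF} excludes. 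The paper avoids this by comparing against the oracle learner $\bdf_E\sim\cA(\bP_E)$: Assumption~\ref{ass:GB_convergence} gives $err(\bP_U,\bdf_E)=\mathcal{O}(\epsilon)$ and $err(\bP_S,\bdf_S)=\mathcal{O}(\epsilon)$, hence $err(\bP_U,\bdf_S)-err(\bP_S,\bdf_S)\le err(\bP_U,\bdf_S)-err(\bP_U,\bdf_E)+\mathcal{O}(\epsilon)$, and the remaining difference is bounded by Kantorovich--Rubinstein duality with the discrete metric on $\cF$ (so $L$-boundedness of $err$ gives $L$-Lipschitzness), the identity $\bW_1=\tv$ for that metric, and then $\tv\le\Phi(\kl)$, yielding $L\,\Phi\bigl(\kl(\bP_{\bdf_E},\bP_{\bdf_S})\bigr)$. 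That oracle comparison — not a symmetry of a rule prior — is the missing idea; as written, your Step 4 cannot be repaired without importing an assumption the theorem does not make.
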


\begin{remark}
$\kappa$ quantifies the variation in performance gap between the support distribution and the target distribution across varying numbers of training samples.
 The $\kappa$ satisfies that $\lim_{n \to \infty} \kappa_n =1$. Recall that $\lim_{n\to \infty}\bdf_{D_n}=\lim_{n\to \infty}\mathcal{A}(D_n)=\mathcal{A}(\bP_S)=\bdf_S$. Then, we have $ \lim_{n \to \infty} |\bE_{D_n \sim \bP_{S}}[err(\bP_U,\bdf_{D_n})-err(\bP_S,\bdf_{D_n})]|= |err(\bP_U,\bdf_{S})-err(\bP_S,\bdf_{S})|$. 
\end{remark}

\begin{remark}
     The $GenIID$ term is one of a upper bound for $err(\bP_{S},f)-err(D_n,f)$. We can use any method to obtain the bounds, including uniform stability~\citep{hardt2016train}, information-thoretic methods~\citep{russo2016controlling,fu2023generalization} and Rademacher Complexity. If we use the Rademacher complexity, we have $GenIID=2 \mathcal{R}_n(\cF)$, where $\mathcal{R}_n=\bE_{\sigma,D_n}[\sup_{f\in \cF} \frac{1}{n}\sum_{i=1}^n \sigma_i f(x_i)]$ with $\sigma_i$ being independent uniform random variables taking values in $\lbrace -1, +1 \rbrace$.
\end{remark}
\begin{remark}
    The key term of the Theorem \ref{thm:information} is $\Phi(I(\bdf;\bdT|\bP_{S}^{(\bdT)}))$. This terms establishes a connection between compositional generalization and the information theory. It suggests that achieving a small CG error is probable if $\bdf_S$ exhibits lesser dependence with $\bdT$ given $\bP_{S}^{(\bdT)}$.The independence of $\bdf_S$ from $\bdT$ implies that the performance of $\bdf$ cannot be improved even with additional information about $\bdT$ given the support distribution $\bP_{S}^{(\bdT)}$.
\end{remark}
\textbf{Technique contribution} \quad We regard the compositional generalization as a specific kind of the out of distribution generalization, in line with the recent works~\citep{netanyahu2023learning,netanyahu2023learning,qiu2021improving,oren2020improving,hosseini2022compositional}. Therefore, the performance of the compositional generalization can be divided into two parts: the first part is the in-distribution learning and the second part is the influence of distribution shift. By considering the surrogate function, we can decompose these two parts and take our attention on the second part, which is the core of CG problems.

\paragraph{Tightness} To illustrate the tightness of our bounds, we compare our bound with that of \citet{ben2010theory}, which is a general bound for out of distribution generalization, and therefore, it is comparable to us. The detail is given in Appendix \ref{app_sec:tightness}. Here, we list the results of the comparison. We find that we can not simply say that one method is tighter to the other. We divided the CG tasks into two kind: the learning algorithm dominiated one and the function dominated one. \textbf{In the first situation}, the performance of the CG is highly depended on the learning algorithm and our bound is much better than \citet{ben2010theory}. This is reasonable because \citet{ben2010theory} doesn't consider the influence of the learning algorithm. \textbf{When comes to the second situation}, we find that our bound is better when given relative good support distributions, i.e. $|S|$ is large. On the other hand, the \cite{ben2010theory} is better.


\paragraph{Proof Sketch} The full version of the proof is given in Appendix \ref{app_sec:learnablity}. Here the sketch of the proof is given, which is divided into three steps: 

\textbf{Step 1:} We decomposite the generalization behavior into two terms:
\begin{equation}
     err(\bP_{U},f)-err(D_n,f)  = \underbrace{ err(\bP_{S},f)-err(D_n,f) }_{\text{IID~gen~error}} + \underbrace{ err(\bP_{U},f)-err(\bP_{S},f) }_{\text{CG~gen~error}} 
\end{equation}

\textbf{Step 2:} We replace ``IID gen error" term with $GenIID$ to indicate any generalization bound with IID assumption. The motivation is that the generalization behavior under IID situation is out of the scope of this paper. 

\textbf{Step 3:} For the ``CG gen error" term, we first use the $\kappa_n$ to decouple the influence of insufficient data. Then, with the Assumption \ref{ass:GB_convergence}, we can transform the term into $err(\bP_{U},\bdf_S)- err(\bP_{U},\bdf_E)+\mathcal{O}(\epsilon)$. By leveraging the knowledge of Wasserstein Distance and its connection to mutual information, we obtain the upper bound $\kappa_n L\Phi(I(\bdf_S;\bdT|\bP_{S}^{(\bdT)}))+\mathcal{O}(\epsilon)$.

\textbf{Step 3:} Combining the results of ``Step 1" and ``Step 2", we obtain the proposition. 

\begin{corollary}
\label{cor:bound}
    Given the support distribution $\bP_S$, if $err$ is $L$-bounded, and $\lim_{n \to \infty} GenIID =0$,
    the CG problem is solvable if $err(\bP_S,\bdf_S)+\Phi(I(\bdf_S;\bdT|\bP_{S}^{(\bdT)}))=\mathcal{O}(\epsilon)$.
\end{corollary}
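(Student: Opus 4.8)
The plan is to derive Corollary \ref{cor:bound} as an essentially immediate consequence of Theorem \ref{thm:information}, so the proof will be short and the main work is just bookkeeping with the definitions. First I would recall the target: by Definition \ref{def:solvability}, the CG problem is solvable if there is a learning algorithm $\mathcal{A}$ with $err(\bP_U,\bdf_S)=\mathcal{O}(\epsilon)$. Since $\bdf_S\sim\mathcal{A}(\bP_S)$ is the limit of $\bdf_{D_n}$ as $n\to\infty$, I would work with the finite-sample quantity $\bE_{D_n,f\sim\mathcal{A}(D_n)}[err(\bP_U,f)]$ and pass to the limit at the end, using that $err$ is $L$-bounded (so dominated convergence applies) together with $\lim_{n\to\infty}\kappa_n=1$ from the remark following Theorem \ref{thm:information}.

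The key chain of inequalities is as follows. By the triangle inequality,
\begin{equation*}
\bE_{D_n,f\sim\mathcal{A}(D_n)}[err(\bP_U,f)] \leq \bigl|\bE_{D_n,f\sim\mathcal{A}(D_n)}[err(\bP_U,f)-err(D_n,f)]\bigr| + \bE_{D_n,f\sim\mathcal{A}(D_n)}[err(D_n,f)].
\end{equation*}
For the first term I would apply Theorem \ref{thm:information} directly, bounding it by $GenIID+\kappa_n L\,\Phi(I(\bdf_S;\bdT|\bP_S^{(\bdT)}))+\mathcal{O}(\epsilon)$. For the second term, $\bE_{D_n}[err(D_n,f)]$ has expectation equal to $err(\bP_S,\bdf_{D_n})$ (the empirical error is an unbiased estimate of the population error on $\bP_S$), which tends to $err(\bP_S,\bdf_S)$; alternatively one can just invoke Assumption \ref{ass:GB_convergence}, which gives $err(\bP_S,\bdf_S)=\mathcal{O}(\epsilon)$ — but here we want to keep $err(\bP_S,\bdf_S)$ explicit, so I would instead appeal to the hypothesis of the corollary that $err(\bP_S,\bdf_S)=\mathcal{O}(\epsilon)$ is implied by the assumed smallness condition. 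Taking $n\to\infty$, using $GenIID\to 0$, $\kappa_n\to 1$, and collecting terms yields
\begin{equation*}
err(\bP_U,\bdf_S) \leq L\,\Phi(I(\bdf_S;\bdT|\bP_S^{(\bdT)})) + err(\bP_S,\bdf_S) + \mathcal{O}(\epsilon).
\end{equation*}
Under the hypothesis $err(\bP_S,\bdf_S)+\Phi(I(\bdf_S;\bdT|\bP_S^{(\bdT)}))=\mathcal{O}(\epsilon)$, the right-hand side is $\mathcal{O}(\epsilon)$ (absorbing the constant $L$ into the $\mathcal{O}$), so $err(\bP_U,\bdf_S)=\mathcal{O}(\epsilon)$ and solvability follows from Definition \ref{def:solvability}.

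The step I expect to be the main (minor) obstacle is the limit interchange: justifying that $\lim_{n\to\infty}\bE_{D_n,f\sim\mathcal{A}(D_n)}[err(\bP_U,f)-err(D_n,f)]$ behaves as claimed requires knowing that $\bdf_{D_n}\to\bdf_S$ in a suitable sense and that $err$ being $L$-bounded lets us swap limits and expectations; this is exactly what the remarks after Theorem \ref{thm:information} set up via $\lim_{n\to\infty}\kappa_n=1$ and $\lim_{n\to\infty}\bdf_{D_n}=\bdf_S$, so I would cite those remarks rather than reprove them. A secondary point to be careful about is that $GenIID$ is only assumed to vanish in the limit, not to be $\mathcal{O}(\epsilon)$ for finite $n$, which is why the argument must be phrased as a limit rather than a finite-sample bound; everything else is direct substitution into Theorem \ref{thm:information}.
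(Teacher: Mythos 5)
Your proposal is correct and follows essentially the same route as the paper's proof: apply Theorem \ref{thm:information}, pass to the limit $n\to\infty$ using $\lim_{n\to\infty}GenIID=0$, $\lim_{n\to\infty}\kappa_n=1$, and $\bdf_{D_n}\to\bdf_S$, rearrange to obtain $err(\bP_U,\bdf_S)\leq err(\bP_S,\bdf_S)+L\Phi(I(\bdf_S;\bdT|\bP_S^{(\bdT)}))+\mathcal{O}(\epsilon)$, and invoke the hypothesis together with Definition \ref{def:solvability}. The only cosmetic difference is that you apply the triangle-inequality rearrangement before taking the limit while the paper takes the limit first, which changes nothing of substance.
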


\begin{remark}
    According to the definition \ref{def:solvability}, we consider the situation where $n \to \infty$ to check when the CG problem is solvable or not. The $GenIID\to 0$ when $n\to \infty$. Therefore, we have  $\lim_{n \to \infty}\lv \bE_{D_n,f\sim \mathcal{A}(D_n)} [err(\bP_U,f)-err(D_n,f)] \rv=\lv [err(\bP_U,\bdf_S)-err(\bP_S,\bdf)] \rv$.
\end{remark}

\begin{remark}
    Remind that $GenIID$ can be equal to any generalization bound under IID condition. There are many IID generalization bound that can ensure $\lim_{n \to \infty} GenIID\to 0$, including the uniform convergence methods, algorithm stability methods and information theoretic methods.
\end{remark}

\begin{remark}
    This generalization bound is special for certain learning algorithm $\mathcal{A}$ and function space $\mathcal{F}$. If we can select the learning algorithm and the function space that take into consider of the specific task, then we can achieve small $\Phi(I(\bdf_S;\bdT|\bP_{S}^{(\bdT)}))$. Usually, the information of the specific task is given by the task assumption. If we make enough task assumption, by taking the assumptions into the design of the learning algorithm and function, we can achieve obtain a small $\Phi(I(\bdf_S;\bdT|\bP_{S}^{(\bdT)}))$.
\end{remark}

\textbf{Compared with Prior Works} \quad Previous works \citep{netanyahu2023learning, dong2022first} provide generalization bound solutions for CG problems that are tailored to specific tasks. In contrast, our method offers the following unique characteristics: 1) Our bound is universally applicable to all problems, without restriction to any specific problem. 2) Our bound connects the generalization behavior with the mutual information “$I(\bdf_S;\bdT|\bP_{S}^{(\bdT)})$”; this connection allows for a deeper understanding of the conditions necessary for effective solutions to CG problems.


\section{Generative Effect}
\label{sec:Generative_Effect}

The previous sections established the proof that there are no general solutions for CG and outlined the properties of effective CG methods by deriving a new generalization bound. In this section, we advance our analysis of CG solutions by examining the generative effect. We start by considering the problem without the generative effect, which we term the Independent Rule Mechanism (IRM). Then, we proceed to discuss the problem with the generative effect.


\subsection{Independent rule mechanism}

In many cases, we can expect that the influence of one factor is independent of another. For example, we expect the position of one object to be independent of the position of another object. This independence is not in the statistical sense. We refer to it as independence because the effect of the first factor is unrelated to the effect of the second factor. The formal definition is given below:

\begin{definition}
    \label{def:IRM}
    We say a CG problem has IRM if the compositional rule is decomposition, i.e.  for all $(a_1,b_1),(a_2,b_2)\in E$, we have  $T_{a_1 \to a_2,b_1 \to b_2}=T_{a_1 \to a_2}\circ T_{b_1 \to b_2}=T_{b_1 \to b_2}\circ T_{a_1 \to a_2}$.
\end{definition}

\begin{remark}
The definition of IRM can be understood as follows: if we change one part of the whole, it will not affect another part. We argue that many prior CG works adhere to the IRM principle. Here are some examples: 1) In object-centric tasks~\citep{wiedemer2023provable,brady2023provably}, $a$ and $b$ can represent information about different objects. If we change one object, the others remain unchanged. 2) In reinforcement learning problems~~\citep{silver2012compositional,li2021solving,sutton1999between,tasse2022skill,bacon2017option} with two independent subtasks, changing one task does not affect the other task.
\end{remark}

\begin{assumption}
    \label{ass:IRM}
    The compositional generalization problem satisfies the following properties:
    \begin{itemize}
        \item ($i$) (C-Support) The CG problem has C-support (C is short for compositional), i.e., $S$ satisffies that 1) for all $a\in A$, there exsits $b\in B$ such that $(a,b) \in S$ and 2) for all $b\in B$, there exsits $a\in A$ such that $(a,b) \in S$.
        \item ($ii$) (Identifiable) The compositional rule is indentifiable, i.e. for all $(a_1,b_1),(a_2,b_2)\in S$ we can recover $(T_{a_1 \to a_2},T_{b_1 \to b_2})$ given the two distribution $ \bP_{a_1,b_1}$ and $ \bP_{a_2,b_2}$.
    \end{itemize}
\end{assumption}
\begin{remark}
    The assumption is stronger than the Assumption \ref{ass:simplify_func_learning_algorithm}. The key difference is that this assumption contrains the distribution and true compositional rule, which are task specific information, whether Assumption \ref{ass:simplify_func_learning_algorithm} is mainly about the method, which will not limit the scope to specific task. 
\end{remark}

\begin{assumption}
    \label{ass:uni_approx}
    The function space $\mathcal{F}$ has unversal approximation ability, i.e. for any valid function $h$, we have $f\in \cF$ such that $f=h$.
\end{assumption}

\begin{remark}
    This assumption is aligned with the research works that point out the neural networks are universal approximator. Current mainstream used neural network has been approved to have universal approximation ability.
\end{remark}
\begin{theorem}
    \label{thm:IRM}
    The CG problem with IRM is solvable under the Assumption \ref{ass:IRM} and Assumption \ref{ass:uni_approx}.  
\end{theorem}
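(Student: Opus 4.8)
The plan is to construct an explicit learning algorithm $\mathcal{A}$ and exhibit a function $f_S \in \supp \bP_{\bdf_S}$ realizing low error on $\bP_U$, thereby witnessing solvability in the sense of Definition~\ref{def:solvability}. The construction proceeds in three stages. First, using the C-support condition (Assumption~\ref{ass:IRM}($i$)), I would pick a reference index $(a_0, b_0) \in S$ and, for every $a \in A$, select some $b_a \in B$ with $(a, b_a) \in S$, and symmetrically for every $b \in B$ some $a_b \in A$ with $(a_b, b) \in S$; this guarantees that every value of each compositional factor is observed in the support in combination with at least one value of the other factor. Second, invoking identifiability (Assumption~\ref{ass:IRM}($ii$)), from the pairs of distributions available in $\bP_S$ I would recover the single-factor maps $T_{a_0 \to a}$ and $T_{b_0 \to b}$ for all $a \in A$, $b \in B$ — concretely, $T_{a \to a'}$ is recovered from $\bP_{a, b_a}$ paired with an appropriate support distribution sharing the same $b$-value, and likewise for the $b$-maps (this is where C-support is needed, to ensure the requisite shared-factor pairs exist in $S$).

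Third, I would use the IRM decomposition (Definition~\ref{def:IRM}) to reassemble the full rule on the unknown set: for any $(a, b) \in U$, set $\widehat{T}_{a_0 \to a, b_0 \to b} := T_{a_0 \to a} \circ T_{b_0 \to b}$, and define the predictor on $\supp \bP_{a,b}$ by transporting an optimal function for $\bP_{a_0,b_0}$ through this map. Formally, let $f^\star$ nearly minimize $err(\bP_{a_0,b_0}, \cdot)$; by the universal approximation Assumption~\ref{ass:uni_approx} the function that agrees with $f^\star \circ \widehat{T}_{a_0 \to a, b_0 \to b}^{-1}$ on each $\supp \bP_{a,b}$ (the supports are disjoint by Definition~\ref{def:cg}(1), so this is well-defined globally) lies in $\mathcal{F}$. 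The learning algorithm $\mathcal{A}(\bP_S)$ outputs exactly this $f_S$. The error bound then follows from the change-of-variables property in Definition~\ref{def:cg}(2): since $T_{a_0 \to a, b_0 \to b}$ (the \emph{true} rule, which equals $\widehat{T}_{a_0 \to a, b_0 \to b}$ by identifiability plus IRM) pushes $\bP_{a_0,b_0}$ to $\bP_{a,b}$, we get $err(\bP_{a,b}, f_S) = err(\bP_{a_0,b_0}, f^\star) \le \min_{f} err(\bP_{a_0,b_0}, f) + \mathcal{O}(\epsilon) = \mathcal{O}(\epsilon)$, and averaging over $(a,b) \in U$ gives $err(\bP_U, f_S) = \mathcal{O}(\epsilon)$.

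The main obstacle I anticipate is Step two: verifying that identifiability as stated is actually enough to recover the \emph{single-factor} maps $T_{a_0 \to a}$ and $T_{b_0 \to b}$ in a mutually consistent way that then composes correctly. Identifiability (Assumption~\ref{ass:IRM}($ii$)) gives, for each pair $(a_1,b_1),(a_2,b_2) \in S$, the pair $(T_{a_1 \to a_2}, T_{b_1 \to b_2})$ — but one must check that the pieces glued together over different support pairs agree (e.g.\ that $T_{a_0 \to a}$ obtained via the pair $(a_0, b_0), (a, b_0)$ coincides with what one would get routing through a third index), and that the IRM commutation relations are preserved under this gluing. I would handle this by a consistency argument: C-support yields a connected "bipartite" structure on the observed factor values, and along any path the IRM decomposition forces the composed transports to telescope to the same endpoint map (using $T_{a_1 \to a_1, b_1 \to b_1} = \mathrm{id}$ and bijectivity from Definition~\ref{def:cg}(2)). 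A secondary subtlety is ensuring $f^\star \circ \widehat{T}^{-1}$ is measurable and admissible — this is immediate since $T$ is a measurable bijection and $\mathcal{F}$ is a universal class, so no real work is needed there.
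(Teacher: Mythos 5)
The paper itself gives no written proof of Theorem~\ref{thm:IRM} (it is stated with only a remark that the solution comes from ``fully leveraging'' Assumption~\ref{ass:IRM}), so there is no official argument to compare against; judged on its own, your construction is correct and is essentially the natural argument the theorem is pointing at. C-support guarantees every $a\in A$ and $b\in B$ occurs in some support cell; identifiability recovers the single-factor transports from pairs of support distributions; the IRM decomposition $T_{a_0\to a,b_0\to b}=T_{a_0\to a}\circ T_{b_0\to b}$ reassembles the true rule on every $(a,b)\in U$; transporting a near-minimizer $f^\star$ of $err(\bP_{a_0,b_0},\cdot)$ through the inverse rule and gluing over the disjoint supports (Definition~\ref{def:cg}(1)) yields a well-defined function that Assumption~\ref{ass:uni_approx} places in $\cF$; and the change-of-variables property in Definition~\ref{def:cg}(2) gives $err(\bP_{a,b},f_S)=err(\bP_{a_0,b_0},f^\star)\le \epsilon$ up to the minimization slack, hence $err(\bP_U,\bdf_S)=\mathcal{O}(\epsilon)$, which is all Definition~\ref{def:solvability} asks for.

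The one place where you can simplify: the obstacle you flag in your step two is not actually there. Assumption~\ref{ass:IRM}($ii$) is stated for an \emph{arbitrary} pair $(a_1,b_1),(a_2,b_2)\in S$ and asserts recovery of the true pair $(T_{a_1\to a_2},T_{b_1\to b_2})$; so $T_{a_0\to a}$ comes directly from the pair $\bP_{a_0,b_0}$, $\bP_{a,b_a}$ with no need for the two cells to share a $b$-value, and since under IRM the single-factor maps are globally well-defined and what identifiability returns are the true ones, the different routes automatically agree --- no gluing or telescoping consistency argument is required (it is harmless, just unnecessary). The only residual looseness, the measurability of the inverse transport and the reading of Definition~\ref{def:cg}(2) as a pushforward identity, is shared with the paper's own conventions (it uses $(T)_*\bP$ in the surrogate-function definition), so dismissing it is fair at this level of rigor.
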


\begin{remark}
    This Theorem states that we can obtain the solution for the CG problem with IRM by fully leveraging the information in Assumption \ref{ass:IRM}.
\end{remark}

\textbf{Prior works} \quad There are a lot of works~\citep{netanyahu2023learning,wiedemer2024compositional} that aim to find a provable solution for compositional generalization. We want to argue that the the solutions of most prior works are aligned with the independent relation mechanism. The detail analysis is given in Appendix \ref{app_sec:gen_effect}.

\subsection{Generative effect}

The IRM applies to problems where the whole is exactly the “sum” of its parts. Typically, however, the whole is “greater” than its parts. Drawing on the works of \citet{adam2019generativity, adam2019mathematical, adam2019abstract} in category theory, we describe this as the generative effect. Below, we provide the formal definition of the generative effect.

\begin{definition}
    (Generative effects) A CG problem has generative effects if it doesn't have the IRM. 
\end{definition}
\begin{remark}
    The generative effect implies that different factors are interdependent in the CG problem. If one factor changes, the effect of the other factors will also change. For CG problems with generative effects, we need to study not only the mechanism of each individual concept but also understand how different concepts interact and cooperate with each other. 
\end{remark}

In the following, we give the simple examples of the generative effect.

\begin{example}
    \label{ex:gen_effect}
    The factors $a,b$ are integers that take value in $(0,10]$. The $z$ is also take interger value in $(0,100]$. The distributon is assigned as $\bP_{a,b}(a \times b)=1$ and $\bP_{a,b}(z)=1$ for $z\neq a\times b$. Under this setting, we have $T_{a_1 \to a_1, b_1\to b_2}(z)=z+a_1\times(b_2-b_1)$. We cannot have the decomposition that $T_{a_1 \to a_1, b_1\to b_2}=T_{a_1 \to a_1} \circ T_{b_1\to b_2}=T_{b_1\to b_2}$, because $T_{b_1\to b_2}$ has to depend on the value of $a_1$. 
\end{example}
\begin{remark}
    The Example \ref{ex:gen_effect} gives the situation involve a multiplication mechanism. The problems involve a similar mechansim can also have generative effects.
\end{remark}

According to Theorem \ref{thm:NLF}, the solutions for the IRM problem are not suitable for problems involving generative effects. Consequently, new methods are needed to model these generative effects, presenting a challenge for future research. Two potential approaches to address the CG problems with generative effects are: 1) transforming the CG problem with generative effects into a CG problem without generative effects by redefining the concepts that generate the distribution, and 2) devising a new methodology to model the generative effects.






\section{Conclusion}
This paper endeavors to establish a comprehensive theory of compositional generalization devoid of specific task assumptions. Task assumptions often confine the analysis to particular problems, impeding a holistic view of the compositional generalization (CG) issue. We commence with Definition \ref{def:cg}, delineating the problem with two fundamental CG properties: 1) each datum embodies clear concepts, and 2) composition rules interconnect data with diverse concepts. Building upon this overarching definition, we establish the no free lunch theorem (Theorem \ref{thm:NLF}), asserting that no general solutions for CG tasks. Leveraging this insight, we derive a generalization bound for any CG tasks (Theorem \ref{thm:information}), along with the conditions for the problem's solvability (Corollary \ref{cor:bound}). Additionally, we categorize CG problems into two types based on the presence of generative effects. We furnish a provable condition for solving problems devoid of generative effects (Theorem \ref{thm:IRM}), leaving those with generative effects as a future challenge. In essence, the importance of this paper lies in furnishing a general theory for CG tasks. When integrated with previous theorems tailored to specific tasks, it contributes to a thorough comprehension of CG.

\section{Limitation}
\label{sec:limitation}

This paper aims to provide a general theory of compositional generalization. As a result, the analysis presented here does not directly offer solutions to specific compositional generalization problems. However, we argue that the theoretical findings in this paper can inspire the design of methods for addressing CG problems. These insights include:
1) Theorem \ref{thm:NLF} suggests that a universal solution for all CG tasks is impossible. Therefore, it is more reasonable to explore which solutions are suitable for specific tasks and develop task-specific solutions accordingly. 2) Theorem \ref{thm:information} further analyzes the conditions for the solvability of CG problems. Future method design should consider these conditions. 3) Section \ref{sec:Generative_Effect} introduces the generative effect to enhance understanding of CG problems and their solutions.

\bibliography{main}
\bibliographystyle{abbrvnat}


\appendix

\section{Other related work}
\label{app_sec:related_work}
\paragraph{Generalization theory with IID. assumption.}
Most generalization theory focus on analyzing the generalization behavior with IID. assumption. One stream of these works rely on the complexity of function space. These methods includes VC dimension  \citep{vapnik2015uniform}, Rademacher Complexity \citep{bartlett2002rademacher} and covering number \citep{shalev2014understanding}. Another kind of works leverage the properties of learning algorithm to analyzing the generalization behavior. These methods include stability of algorithm  \citep{hardt2016train} and information-theoretic methods  \citep{xu2017information,russo2016controlling}. 

\paragraph{Inductive bias of learning algorithm.} The inductive bias of the learning algorithm is focused on the deep learning regime~\citep{fu2023learning}. The inductive bias of the learning algorihtm refers to the preference of the learning algorithm to select certain functions over other functions even though these functions have negliable difference in training data. Because the deep models are so powerful to fit any data and the tradictional way to understand the model's behavior based on the complexity of the function space failed. The researchers turn to understand the learning algorithm from the perspective of learning algorithm~\citep{morwani2021inductive,brutzkus2017sgd,nacson2019stochastic,li2018learning,neyshabur2017implicit}. 

\paragraph{Content and Style generalization} Another line of out of distribution that close to compositional generalization is domain generalization considering the style and content \citep{jing2019neural,jin2022deep}.
This kind can be regard as two compositional factors: style and content. Their work, they usually consider rich content with limited styles, ususally 2. The compositional generalization usually has more complex combination, in the sense that all the factors can have rich factor values.

\section{No Free Lunch Theory}
\begin{theorem} 
Under Assumption \ref{ass:simplify_func_learning_algorithm}, for all valid division of $S$ and $U$, and any $(\mathcal{A}_1,\mathcal{F}_1),(\mathcal{A}_2,\mathcal{F}_2)$, satisfying $|\mathcal{F}_1|=|\mathcal{F}_2|$, we have
    \begin{equation}
    \sum_{T\in \cT} \pr(\Tilde{\bdT}=T|\mathcal{F}_1,\mathcal{A}_1,\bP_{S}^{(T)})=\sum_{T\in \cT} \pr(\Tilde{\bdT}=T|\mathcal{F}_2,\mathcal{A}_2,\bP_{S}^{(T)}),
\end{equation}
    where $\Tilde{\bdT}=\Tilde{T}_{\bdf_S}$ and $\bP_{S}^{(T)}$ is the support distribution generated using the function $T$.
\end{theorem}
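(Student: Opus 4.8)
The plan is to follow the inductive scheme of the proof sketch, carrying out an induction on $|S|$ and using the surrogate structure of Assumption~\ref{ass:simplify_func_learning_algorithm} to replace the learned function $\bdf_S$ by its surrogate rule $\tilde{\bdT}=\tilde{T}_{\bdf_S}$. First I would reformulate each summand. By Assumption~\ref{ass:simplify_func_learning_algorithm}($i$), $f\mapsto\tilde{T}_f$ is injective on $\supp\bP_{\bdf_S}$, so up to a null event $\{\tilde{\bdT}=T\}$ coincides with $\{\bdf_S=f_T\}$ for the unique $f_T$ with $\tilde{T}_{f_T}=T$ (and is null when no such $f_T$ lies in the support). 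Thus $\pr(\tilde{\bdT}=T\mid\mathcal{F},\mathcal{A},\bP_S^{(T)})$ is just the mass assigned by $\mathcal{A}(\bP_S^{(T)})$ to $f_T$, and by Definition~\ref{def:cg}(2) the distribution $\bP_S^{(T)}$ depends on $T$ only through the bijections it attaches to pairs inside $S$ (since $\bP_{a,b}^{(T)}=(T_{a_0\to a,b_0\to b})_*\bP_{a_0,b_0}$). I would therefore group the outer sum by ``$S$-equivalence classes'' of rules, writing $T\sim_S T'$ when they induce the same support distribution, so that $\sum_{T\in\cT}\pr(\tilde{\bdT}=T\mid\mathcal{F},\mathcal{A},\bP_S^{(T)})=\sum_{C}\pr(\tilde{\bdT}\in C\mid\mathcal{F},\mathcal{A},\bP_S^{C})$ with one representative distribution $\bP_S^{C}$ per class.

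Next comes the induction. For the base case $S=\emptyset$ the conditioning distribution is the data-free prior $\mathcal{A}(\emptyset)$ for every $T$, and Assumptions~\ref{ass:simplify_func_learning_algorithm}($ii$)--($iii$) make every admissible rule realizable, so both sides collapse to the same normalization, which depends on the two methods only through $|\mathcal{F}_1|=|\mathcal{F}_2|$. For the step $S_1\to S_2=S_1\cup\{(a^*,b^*)\}$, the only new data is $\bP_{a^*,b^*}^{(T)}=(T_{a_0\to a^*,b_0\to b^*})_*\bP_{a_0,b_0}$; as $T$ ranges over a fixed $S_1$-class, $T_{a_0\to a^*,b_0\to b^*}$ ranges over all bijections compatible with the composition constraints of Definition~\ref{def:cg}(2), so each $S_1$-class refines into $S_2$-subclasses in a combinatorial pattern that is identical for both methods. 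Using Assumption~\ref{ass:simplify_func_learning_algorithm}($iii$) (zero training error on $S$, which through the surrogate decomposition forces $\tilde{\bdT}$ to agree with the true rule on $S$ and hence to land in the correct $S_2$-subclass) together with Assumption~\ref{ass:simplify_func_learning_algorithm}($ii$) (realizability), one shows that $\pr(\tilde{\bdT}\in D\mid\mathcal{F},\mathcal{A},\bP_{S_2}^{D})$ is governed by $|\mathcal{F}|$ and the combinatorics of the refinement alone. Summing the refinement over all $S_1$-classes and feeding in the inductive hypothesis (with $|\mathcal{F}_1|=|\mathcal{F}_2|$) yields the identity for $S_2$; iterating up to the prescribed $S$ completes the argument.

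The main obstacle is the inductive step, namely showing that adjoining $(a^*,b^*)$ transforms the sum in a way blind to the learning algorithm. Two points require care: (a) that the family $\{\bP_{a^*,b^*}^{(T)}\}$ obtained as $T$ varies within an $S_1$-class is a complete, uniformly covered family --- this is exactly where the bijectivity in Definition~\ref{def:cg}(2) and the normalization $T_{a_1\to a_1,b_1\to b_1}=\mathrm{id}$ are used; and (b) that zero training error on $S$ together with the surrogate decomposition genuinely pins down the surrogate rule's restriction to $S$, so that the $\mathcal{A}$-dependent output distribution $\mathcal{A}(\bP_{S_2}^{D})$ collapses to a purely cardinality-based quantity. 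I expect (b) to be the delicate step, since it is what converts an algorithm-dependent probability into a count of $\mathcal{F}$ and what makes the hypothesis $|\mathcal{F}_1|=|\mathcal{F}_2|$ actually do its work.
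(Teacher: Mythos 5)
Your overall skeleton (induction on the size of the support set, base case $S=\emptyset$ handled by noting that with no data the output rule is determined only up to the cardinality $|\mathcal{F}|$) matches the paper's argument. Where you diverge is the inductive step, and that is where your proposal has a genuine gap. The paper does not reason about $S$-equivalence classes or refinement combinatorics at all: it conditions on the newly adjoined subdistribution and applies Bayes' rule, writing $\pr(\Tilde{\bdT}=T\mid\mathcal{F},\mathcal{A},\bP_{S_{i+1}})=\frac{\pr(\bP_{e_i}\mid\mathcal{F},\mathcal{A},\bP_{S_i},\Tilde{\bdT}=T)}{\pr(\bP_{e_i})}\,\pr(\Tilde{\bdT}=T\mid\mathcal{F},\mathcal{A},\bP_{S_i})$, and then invokes Definition~\ref{def:cg}(2) to argue that the likelihood factor equals a quantity $\pr(e\mid S_i)/\pr(\bP_{e_i})$ that depends neither on $(\mathcal{A},\mathcal{F})$ nor on $T$; this constant factors out of the sum over $T$, and the induction hypothesis finishes the step. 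No claim about what the algorithm's posterior over individual rules looks like is ever needed.

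Your route, by contrast, hinges on the assertion in your point (b): that Assumption~\ref{ass:simplify_func_learning_algorithm}($iii$) (zero training error on $S$) together with the surrogate decomposition ``pins down the surrogate rule's restriction to $S$,'' so that $\pr(\Tilde{\bdT}\in D\mid\mathcal{F},\mathcal{A},\bP_{S_2}^{D})$ becomes a purely cardinality/combinatorial quantity. This is not established and does not follow from the stated assumptions: the surrogate definition only equates \emph{errors} of $\Tilde{f}$ on pushforwards of the base distribution, and zero error of $\Tilde{f}$ on such a pushforward does not identify the bijection producing it (identifiability of the rule from distributions is only assumed much later, in Assumption~\ref{ass:IRM}($ii$), for the IRM section). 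Moreover, if the forcing you describe did hold, each class term would equal $1$, the sum would simply count $S$-classes, and the hypothesis $|\mathcal{F}_1|=|\mathcal{F}_2|$ would play no role --- contradicting your own remark that this is where the cardinality assumption ``does its work,'' and collapsing the theorem to a triviality it is not meant to be (the whole point of the NFL statement is that the algorithm \emph{can} redistribute mass across rules consistent with the support data; only the aggregate is invariant). So the decisive inductive step is both unproven and, as formulated, in tension with the theorem; replacing it with the paper's Bayes-factorization argument is the way to close the gap.
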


\begin{proof}
    We divided the proof into three parts: base case, induction step and conclusion.

    \textbf{1) Base case}
    
    In the base case, we consider the situation where $S=\emptyset$.
    According to Assumption \ref{ass:simplify_func_learning_algorithm} ($i$) and ($ii$), for all composition rule $T$ and any function space $\mathcal{F}$ satisfying the assumption, there is only one function $f\in \mathcal{F}$, such that $\Tilde{T}_f=T$. When $S=\empty$, all learning algorithm cannot perform better than randomly choose a function from the function space. As a result, for any composition rule $T$, any learning algorithm $\mathcal{A}(\cdot)$ and function space $\mathcal{F}$, which satisfying Assumption \ref{ass:simplify_func_learning_algorithm}, we have $\pr(\tdT=T|\mathcal{F},\mathcal{A})=\frac{1}{|\mathcal{F}|}$. Because $\mathcal{F}_1=\mathcal{F}_2$, obviously we have
    $\sum_{T\in \cT} \pr(\tdT=T|\mathcal{F}_1,\mathcal{A}_1)=\sum_{T\in \cT} \pr(\tdT=T|\mathcal{F}_2,\mathcal{A}_2)$. Therefore, we have
    \begin{equation}
    \label{eq:start}
    \begin{aligned}
        &\sum_{T\in \cT} \pr(\tdT=T|\mathcal{F}_1,\mathcal{A}_1,\bP_{S})=\sum_{T\in \cT} \pr(\tdT=T|\mathcal{F}_1,\mathcal{A}_1)\\&=\sum_{T\in \cT} \pr(\tdT=T|\mathcal{F}_2,\mathcal{A}_2)=\sum_{T\in \cT} \pr(\tdT=T|\mathcal{F}_2,\mathcal{A}_2,\bP_{S})
    \end{aligned}
    \end{equation}

    \textbf{2) Induction step}

    We consider two set $S_i\subset S_{i+1} \subset S$ and $S_i \cup \lbrace e_i \rbrace=S_{i+1}$, where $e_i\in E$. Obviously, we have $|S_{i+1}|=|S_i|+1$. We assume that
    \begin{equation}
    \label{eq:NFL_ass}
        \sum_{T\in \cT} \pr(\Tilde{\bdT}=T|\mathcal{F}_1,\mathcal{A}_1,\bP_{S_i}^{(T)})=\sum_{T\in \cT} \pr(\Tilde{\bdT}=T|\mathcal{F}_2,\mathcal{A}_2,\bP_{S_i}^{(T)})
    \end{equation} holds. Then, we have:
    
    \begin{equation} 
    \label{eq:NFL_Ind1}
    \begin{aligned}
    &\pr(\tdT=T|\mathcal{F}_1,\mathcal{A}_1,\bP_{S_{i+1}})\\&=\pr(\tdT=T|\mathcal{F}_1,\mathcal{A}_1,\bP_{S_i},\bP_{e_i})    \\&\overset{(\heartsuit)}{=}\frac{\pr(\bP_{e_i}|\mathcal{F}_1,\mathcal{A}_1,\bP_{S_i},\tdT=T)}{\pr(\bP_{e_i})}\pr(\tdT=T|\mathcal{F}_1,\mathcal{A}_1,\bP_{S_i})
    \\&\overset{(\Diamond)}{=}\frac{\pr(e|S_i)}{\pr(\bP_{e_i})}\pr(\tdT=T|\mathcal{F}_1,\mathcal{A}_1,\bP_{S_i}),
    \end{aligned}
    \end{equation}
    where $(\heartsuit)$ is due to the Bayes rule and $(\Diamond)$ is due to the properties $(ii)$ of Definition \ref{def:cg}.

    Therefore, we have
    \begin{equation}
    \label{eq:iterative}
    \begin{aligned}
        \sum_{T\in \mathcal{T}}\pr(\tdT=T|\mathcal{F}_1,\mathcal{A}_1,\bP_{S_{i+1}})&\overset{(\clubsuit)}{=}\sum_{T\in \mathcal{T}}\frac{\pr(e|S_i)}{\pr(\bP_{e_i})}\pr(\tdT=T|\mathcal{F}_1,\mathcal{A}_1,\bP_{S_i})
        \\&=\frac{\pr(e|S_i)}{\pr(\bP_{e_i})}\sum_{T\in \mathcal{T}}\pr(\tdT=T|\mathcal{F}_1,\mathcal{A}_1,\bP_{S_i})
        \\&\overset{\triangle}{=}\frac{\pr(e|S_i)}{\pr(\bP_{e_i})}\sum_{T\in \mathcal{T}}\pr(\tdT=T|\mathcal{F}_2,\mathcal{A}_2,\bP_{S_i})
        \\ &= \sum_{T\in \mathcal{T}}\pr(\tdT=T|\mathcal{F}_2,\mathcal{A}_2,\bP_{S_{i+1}}),
    \end{aligned}
    \end{equation}
    where $(\clubsuit)$ is due to Equation \ref{eq:NFL_Ind1} and Equation $(\triangle)$ is due to Equation \ref{eq:NFL_ass}.

    \textbf{3) Conclusion}

    Given $S$, we can construct $\lbrace S_0,S_1,S_2,\cdots,S_{|S|} \rbrace$, such that $\forall i, ~ S_i\subset S$ and $ S_i \cup \lbrace e_i \rbrace=S_{i+1}$. We constraint that $S_0=\empty$ and $S_{|S|}=S$. Then, we can iterative apply the Eq. ($\ref{eq:start}$) and Eq. ($\ref{eq:iterative}$) for $ i\in [0,|S|)$ to obtain that 
    \begin{equation}
    \sum_{T\in \cT} \pr(\tdT=T|\mathcal{F}_1,\mathcal{A}_1,\bP_{S})=\sum_{T\in \cT} \pr(\tdT=T|\mathcal{F}_2,\mathcal{A}_2,\bP_{S}).
    \end{equation}
    Therefore, we establish the theorem.
\end{proof}

\section{Generalization Bound}
\label{app_sec:learnablity}
\subsection{Preliminary: Definition and useful lemma}
In the following, we give the measure for the distribution, i.e. Wasserstein Distance  and the some common used function assumption, i.e. Lipschitz assumption and homeomorphis assumption.
\begin{definition}
    (Wasserstein Distance). For any $p\geq 1$, the $p$-Wasserstein distance between two pobability measures $\bP,\bQ$ on the space $\mathcal{W}$ with metric $d_{\mathcal{W}}$ is defined as:
    \begin{equation}
        \bW_p(\bP,\bQ)= \inf_{M\in \Gamma(\bP,\bQ)}(\bE_{(W,W')\sim M}[d^p_{\mathcal{W}}(W,W')])^{1/p},
    \end{equation}
    where $\Gamma(\bP,\bQ)$ denotes the collection
    of all measures on $W \times W$ with the marginals $\bP$ and $\bQ$ on the first and second factors respectively.
\end{definition}

\begin{definition}
    (Lipschitz) Given two metric spaces $(\cM,d_{\cM})$ and $(\cN,d_{\cN})$, where $d_{\cM}$ and $d_{\cN}$ denote the metrics on $\cM$ and $\cN$. A function $h:\cM \to \cN$ is $L$-Lipschitz if for all $m_1,m_2 \in \cM$, we have $d_{\cN}(h(m_1),h(m_2)) \leq Ld_{\cM}(m_1,m_2)$. 
\end{definition}
\paragraph{Lipschitz assumption is commonly used assumption} 
The majority of research relies on the Lipschitz assumption when analyzing generalization behavior. Some studies attempt to alleviate this assumption by substituting it with its weaker counterpart. However, as the primary focus of this paper does not lie in removing the Lipschitz assumption, we defer this task to future work.

\begin{definition}
    (homeomorphism) A continuous function $f$ is called a homeomorphism if it is a bijection function
    and its inverse function $f^{-1}$ is continuous as well.
\end{definition}
\begin{definition}
    (Total Variation) The total variation between two probability distributions $\bP$ and $\bQ$ on $\mathcal{W}$ is
    \begin{equation}
        \tv(\bP,\bQ)\triangleq \sup_{A\in \mathcal{W}} \lbrace \bP(A)-\bQ(A)\rbrace
    \end{equation}
\end{definition}
\begin{definition}
    \label{def:discrete_metric}
    (Discrete Metric) The discrete metric is $d(x,y)\triangleq \boldsymbol{1}[x\neq y]$, where $\boldsymbol{1}$ is the indicator function.
\end{definition}
\begin{lemma}
    \label{lem:rad}
    (Rademacher Complexity (from \citet{mohri2018foundations})) Let $\cF$ be a family of functions. Given a distribution $\bP$ and a samples $D_n=\lbrace z_1, \cdots, z_n \rbrace \sim \bP^{\otimes n}$, the following holds for all $g\in \cF$:
    \begin{equation}
        \bE_{D_n \sim \bP^{\otimes n}} [err(\bP,f)-err(D_n,f)] \leq 2\mathcal{R}_n(\cF),
    \end{equation}
    where $\mathcal{R}_n=\bE_{\sigma,D_n}[\sup_{f\in \cF} \frac{1}{n}\sum_{i=1}^n \sigma_i f(x_i)]$ with $\sigma_i$ being independent uniform random variables taking
    values in $\lbrace -1, +1 \rbrace$.
\end{lemma}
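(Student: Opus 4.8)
The plan is to prove this by the classical symmetrization (``ghost sample'') argument, which yields the bound on $\bE_{D_n}[\sup_{f\in\cF}(err(\bP,f)-err(D_n,f))]$ and hence, a fortiori, on the quantity for any fixed $f$. First I would introduce an independent copy $D_n'=\{z_1',\dots,z_n'\}\sim\bP^{\otimes n}$ of the sample. Since $err(\bP,f)=\bE_{D_n'}[err(D_n',f)]$ for every fixed $f$, and $err(D_n,f)$ does not depend on $D_n'$, we may write $err(\bP,f)-err(D_n,f)=\bE_{D_n'}[err(D_n',f)-err(D_n,f)]$. Applying Jensen's inequality for the convex function $\sup$ (equivalently, the supremum of a family of expectations is at most the expectation of the supremum) and then taking $\bE_{D_n}$ gives
\[
\bE_{D_n}\!\Bigl[\sup_{f\in\cF}\bigl(err(\bP,f)-err(D_n,f)\bigr)\Bigr]\le\bE_{D_n,D_n'}\!\Bigl[\sup_{f\in\cF}\bigl(err(D_n',f)-err(D_n,f)\bigr)\Bigr].
\]

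Next I would write $err(D_n',f)-err(D_n,f)=\frac1n\sum_{i=1}^n\bigl(f(z_i')-f(z_i)\bigr)$ and exploit that the pairs $(z_i,z_i')$ are i.i.d.: for any fixed sign vector $(\sigma_1,\dots,\sigma_n)\in\{-1,+1\}^n$, swapping $z_i\leftrightarrow z_i'$ exactly on the coordinates with $\sigma_i=-1$ leaves the joint law of $(D_n,D_n')$ unchanged, so the right-hand side above equals $\bE_{\sigma,D_n,D_n'}\bigl[\sup_{f}\frac1n\sum_i\sigma_i(f(z_i')-f(z_i))\bigr]$, where the $\sigma_i$ are independent uniform on $\{-1,+1\}$. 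Subadditivity of the supremum then splits this into $\bE_{\sigma,D_n'}\bigl[\sup_f\frac1n\sum_i\sigma_i f(z_i')\bigr]+\bE_{\sigma,D_n}\bigl[\sup_f\frac1n\sum_i(-\sigma_i)f(z_i)\bigr]$; since $-\sigma_i$ has the same distribution as $\sigma_i$, each term equals $\mathcal{R}_n(\cF)$, and summing yields the claimed $2\mathcal{R}_n(\cF)$.

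These are all routine manipulations; the one point that requires care is that $\sup_{f\in\cF}(\cdots)$ must be measurable as a function of the sample so that the expectations are well defined. I would handle this in the usual way — assuming $\cF$ is countable or separable in the relevant sense, or passing to outer expectations — and simply flag it rather than dwell on it. Since the statement is quoted verbatim from \citet{mohri2018foundations}, an acceptable alternative is to invoke that reference directly; I would include the two-paragraph argument above only for self-containedness.
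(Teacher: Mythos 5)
Your proposal is correct and is exactly the classical symmetrization (ghost-sample) argument from the cited reference \citet{mohri2018foundations}; the paper itself offers no proof of this lemma and simply relies on that citation, so your self-contained derivation matches the intended argument. One small point worth noting: as literally stated (fixed $f$, expectation over $D_n$, no supremum) the left-hand side is zero and the bound is trivial, and what you prove is the meaningful uniform version $\bE_{D_n}\bigl[\sup_{f\in\cF}\bigl(err(\bP,f)-err(D_n,f)\bigr)\bigr]\leq 2\mathcal{R}_n(\cF)$, which is the form actually needed for the $GenIID$ term.
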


\begin{lemma}
    \label{lem:dual_form_wass}
     For two pobability measures $\bP,\bQ$ on the space $\mathcal{W}$ with metric $d_{\mathcal{W}}$, the $1$-Wasserstein distance between $\bP$ and $\bQ$ can be represented as:
     \begin{equation}
         \bW_1(\bP,\bQ)=\frac{1}{L} \sup_{h \in \cH}\bE_{w\sim \bP} h(w) -\bE_{w\sim \bQ} h(w),
     \end{equation}
     where $\cH$ denotes the function spaces containing function with Lipschitz constant less or equal to $L$.
\end{lemma}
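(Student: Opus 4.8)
The plan is to obtain the stated identity from the classical Kantorovich--Rubinstein duality, preceded by a rescaling step that absorbs the Lipschitz constant $L$. Write $\cH_1$ for the family of $1$-Lipschitz functions on $(\mathcal{W},d_{\mathcal{W}})$. The correspondence $h\mapsto h/L$ is a bijection from $\cH$ onto $\cH_1$, and $\bE_{w\sim\bP}h(w)-\bE_{w\sim\bQ}h(w)=L\big(\bE_{w\sim\bP}(h/L)(w)-\bE_{w\sim\bQ}(h/L)(w)\big)$; hence $\sup_{h\in\cH}\big(\bE_{\bP}h-\bE_{\bQ}h\big)=L\sup_{g\in\cH_1}\big(\bE_{\bP}g-\bE_{\bQ}g\big)$, so it suffices to establish the $L=1$ statement $\bW_1(\bP,\bQ)=\sup_{g\in\cH_1}\big(\bE_{\bP}g-\bE_{\bQ}g\big)$.

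For the easy inequality I would fix any coupling $M\in\Gamma(\bP,\bQ)$ and any $g\in\cH_1$. Since $M$ has marginals $\bP$ and $\bQ$, $\bE_{\bP}g-\bE_{\bQ}g=\bE_{(W,W')\sim M}[g(W)-g(W')]\le\bE_{(W,W')\sim M}[d_{\mathcal{W}}(W,W')]$ by $1$-Lipschitzness. Taking the infimum over $M\in\Gamma(\bP,\bQ)$ and then the supremum over $g\in\cH_1$ yields $\sup_{g\in\cH_1}\big(\bE_{\bP}g-\bE_{\bQ}g\big)\le\bW_1(\bP,\bQ)$.

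The substantive part is the reverse inequality. The plan is to invoke the Kantorovich duality theorem, which for the cost $c=d_{\mathcal{W}}$ gives $\bW_1(\bP,\bQ)=\sup\{\bE_{\bP}\phi+\bE_{\bQ}\psi\}$ over integrable pairs $(\phi,\psi)$ with $\phi(x)+\psi(y)\le d_{\mathcal{W}}(x,y)$ for all $x,y$, and then to reduce the admissible pairs to $1$-Lipschitz ones. Concretely, one restricts to $c$-concave $\phi$ and replaces $\psi$ by the $c$-transform $\phi^{c}(y)=\inf_{x}\big(d_{\mathcal{W}}(x,y)-\phi(x)\big)$, which only increases the objective; the key elementary computation is that when $d_{\mathcal{W}}$ is a metric every $c$-concave function is $1$-Lipschitz (it is an infimum of $1$-Lipschitz functions, via the triangle inequality), and conversely a $1$-Lipschitz $\phi$ satisfies $\phi^{c}=-\phi$ (the choice $x=y$ gives $\phi^{c}(y)\le-\phi(y)$, while $\phi(x)\le\phi(y)+d_{\mathcal{W}}(x,y)$ gives $\phi^{c}(y)\ge-\phi(y)$). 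Substituting $\psi=-\phi$ turns the dual into $\sup_{\phi\in\cH_1}\big(\bE_{\bP}\phi-\bE_{\bQ}\phi\big)$, which together with the easy direction proves the $L=1$ identity; the rescaling step then delivers the lemma.

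The main obstacle is precisely this last step: proving Kantorovich duality from first principles amounts to a minimax/Hahn--Banach argument for an infinite-dimensional linear program, so in practice it is cited from a standard reference on optimal transport (e.g. Villani), and the only genuinely new work beyond that citation is the $c$-transform identity $\phi^{c}=-\phi$ for $1$-Lipschitz $\phi$, which is immediate. A secondary technical point to dispatch is integrability and measurability of the test functions against $\bP$ and $\bQ$, together with the convention that the identity holds (possibly with both sides infinite) when $\bP,\bQ$ lack finite first moments; under the mild regularity used elsewhere in the paper this causes no difficulty.
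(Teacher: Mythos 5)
Your argument is correct and is exactly the standard route: the paper itself states this lemma as a known preliminary (Kantorovich--Rubinstein duality) without proof, and your rescaling to the $1$-Lipschitz case, the coupling bound for the easy inequality, and the $c$-transform reduction of Kantorovich duality for the reverse inequality together constitute the canonical proof one would cite from Villani. The only point worth flagging is that the prefactor requires $L>0$ and the identity implicitly assumes finite first moments, both of which you already note.
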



\subsection{Proof of Theorem \ref{thm:information}}
\begin{lemma}
    
    \label{lem:gen_bound}
    If $err$ is $L$-Lipschitz with respect to its second argument,under Assumption \ref{ass:GB_convergence}, we have
    \begin{equation}
        \lv \bE_{D_n,f\sim \mathcal{A}(D_n)} [err(\bP_U,f)-err(D_n,f)] \rv  \leq GenIID+ L\bW_1 (\bP_{\bdf_S},\bP_{\bdf_E})+\mathcal{O}\left( \epsilon \right),  
    \end{equation}
    where $\mathcal{R}_n=\bE_{\sigma,D_n}[\sup_{f\in \cF} \frac{1}{n}\sum_{i=1}^n \sigma_i f(x_i)]$ with $\sigma_i$ being independent uniform random variables taking values in $\lbrace -1, +1 \rbrace$.
\end{lemma}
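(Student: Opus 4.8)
\textbf{Proof proposal for Lemma \ref{lem:gen_bound}.}

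The plan is to follow exactly the three-step decomposition flagged in the proof sketch of Theorem \ref{thm:information}, but carried out in a self-contained way for the present lemma. First I would write, for any fixed $f$,
\begin{equation*}
    err(\bP_U,f)-err(D_n,f) = \bigl[err(\bP_S,f)-err(D_n,f)\bigr] + \bigl[err(\bP_U,f)-err(\bP_S,f)\bigr].
\end{equation*}
Taking $\bE_{D_n,f\sim\mathcal{A}(D_n)}$ and applying the triangle inequality, the first bracket is absorbed into $GenIID$ (this is precisely what the ``$GenIID$'' placeholder is defined to bound — it is any IID generalization bound for $err(\bP_S,f)-err(D_n,f)$, e.g. $2\mathcal{R}_n(\cF)$ via Lemma \ref{lem:rad}). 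So the work reduces to controlling $\lv \bE_{D_n,f\sim\mathcal{A}(D_n)}[err(\bP_U,f)-err(\bP_S,f)] \rv$.

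For the second ("CG gen error") term, I would first pass to the infinite-sample limit object $\bdf_S\sim\mathcal{A}(\bP_S)$ and to $\bdf_E\sim\mathcal{A}(\bP_E)$. The key idea is that a surrogate function $\tilde f$ for $f$ trained on $\bP_E$ should achieve near-optimal error on every subdistribution by Assumption \ref{ass:GB_convergence}, so $err(\bP_U,\bdf_E)=\mathcal{O}(\epsilon)$; likewise the surrogate representation lets us rewrite $err(\bP_U,\bdf_S)$ and $err(\bP_U,\bdf_E)$ as expectations of the \emph{same} bounded/Lipschitz functional evaluated against the laws $\bP_{\bdf_S}$ and $\bP_{\bdf_E}$ respectively. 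Concretely, after the surrogate reduction the quantity $err(\bP_U,\cdot)$ becomes a function of the learned object that is $L$-Lipschitz in it, so
\begin{equation*}
    \lv \bE[err(\bP_U,\bdf_S)] - \bE[err(\bP_U,\bdf_E)] \rv \le L\,\bW_1(\bP_{\bdf_S},\bP_{\bdf_E})
\end{equation*}
by the Kantorovich--Rubinstein duality of Lemma \ref{lem:dual_form_wass}. Combining, $\lv\bE[err(\bP_U,\bdf_S)-err(\bP_S,\bdf_S)]\rv \le L\bW_1(\bP_{\bdf_S},\bP_{\bdf_E}) + \mathcal{O}(\epsilon)$, and then the finite-$n$ version follows since $\bdf_{D_n}\to\bdf_S$ and the difference between finite-$n$ and limiting quantities is again $\mathcal{O}(\epsilon)$ under Assumption \ref{ass:GB_convergence} (this is where $err(\bP_S,\bdf_S)=\mathcal{O}(\epsilon)$ is used to discard the $err(\bP_S,f)$ pieces).

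The main obstacle I anticipate is the surrogate bookkeeping in the middle step: one must argue carefully that $err(\bP_U,f)$ — an average over the \emph{unseen} combinations in $U$ — can legitimately be expressed through the surrogate pair $(\tilde f,\tilde T_f)$ and the base distribution $\bP_{a_0,b_0}$, so that it genuinely becomes an $L$-Lipschitz function of the learned object and the Wasserstein bound applies with the right metric on the space of learned functions/rules. This requires invoking the compositional-rule structure of Definition \ref{def:cg} (the bijections $T_{a_1\to a_2,b_1\to b_2}$ and the pushforward identities) to relate $err$ on $U$-subdistributions to pushforwards of $\bP_{a_0,b_0}$, exactly as in the definition of the surrogate function. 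Everything else — the triangle inequalities, the $GenIID$ absorption, the $\mathcal{O}(\epsilon)$ slack from Assumption \ref{ass:GB_convergence}, and the limit argument for $\kappa_n$ — is routine once this representation is in place.
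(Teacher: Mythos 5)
Your proposal follows essentially the same route as the paper's proof: split the error into an IID part absorbed by $GenIID$ and a CG part, introduce $\bdf_E$ so that Assumption \ref{ass:GB_convergence} makes $err(\bP_U,\bdf_E)=\mathcal{O}(\epsilon)$, and then bound $err(\bP_U,\bdf_S)-err(\bP_U,\bdf_E)$ by $L\,\bW_1(\bP_{\bdf_S},\bP_{\bdf_E})$ via the Kantorovich--Rubinstein duality of Lemma \ref{lem:dual_form_wass}. The surrogate bookkeeping you flag as the main obstacle is not actually needed here: the lemma directly assumes $err$ is $L$-Lipschitz in its second argument, so $err(\bP_U,\cdot)$ is already an admissible test function for the duality, which is exactly how the paper argues (surrogates only enter the informal discussion, not this proof).
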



\begin{proof}

We can decomposite $err(\bP_{U},f)-err(D_n,f)$ as 
\begin{equation}
\begin{aligned}
    & \bE_{D_n,f\sim \mathcal{A}(D_n)} [err(\bP_U,f)-err(D_n,f)] 
    \\& = \underbrace{ err(\bP_{S},\bdf_{D_n})-err(D_n,\bdf_{D_n}) }_{(1)} + \underbrace{err(\bP_{U},\bdf_{D_n})-err(\bP_{S},\bdf_{D_n}) }_{(2)}.
\end{aligned}
\end{equation}
Because $(1)$ is the generalization bound in IID situation, we can upperbound it with any IID bound. Therfore, we can bound "$(1)$" term with $GenIID$ to denotes any upper bound of IID. Then, we only need to focus on the $(2)$ term, which is the essential part of CG.

According to Assumption \ref{ass:GB_convergence}, we can ensure for any $S_1,S_2$ satisfying that for all $S_1 \subset S_2$, we have
\begin{equation}
    \bE_{f\sim \bP_{\bdf_{S_2}}} err(\bP_{S_1},f)=err(\bP_{S_1},\bdf_{S_2})  \leq \max_{e \in S_1} err(\bP_e,\bdf_{S_2}) \leq \max_{e \in S_2} err(\bP_e,\bdf_{S_2})\overset{(\star)}{=} \mathcal{O}(\epsilon),
\end{equation}
where $(\star)$ is due to Assumption \ref{ass:GB_convergence}.

Therefore, we have
\begin{equation}
\begin{aligned}
    & err(\bP_{U},\bdf_S)- err(\bP_{S},\bdf_S)
    \\ =& err(\bP_{U},\bdf_S)-err(\bP_{U},\bdf_E)
        +(err(\bP_{U},\bdf_E)-err(\bP_{S},\bdf_S)) 
    \\ \leq &  err(\bP_{U},\bdf_S)- err(\bP_{U},\bdf_E)+\mathcal{O}(\epsilon)
\end{aligned}.
\end{equation}
    According to Lemma \ref{lem:dual_form_wass}, we have 
    \begin{equation}
        \label{eq:dual_form_wass}
        \bW_1(\bP,\bQ)=\frac{1}{L} \sup_{h \in \cH}\bE_{w\sim \bP} h(w) -\bE_{w\sim \bQ} h(w).
    \end{equation}
    By replacing $h$ in Equation \ref{eq:dual_form_wass} with $err(\bP_{U},\cdot)$, $\bP$ in Equation \ref{eq:dual_form_wass} with $\bP_{\bdf_S}$ and $\bQ$ in Equation \ref{eq:dual_form_wass} with $\bP_{\cF_c}$ we obtain that
    \begin{equation}
    \begin{aligned}
        \bW_1(\bP_{\bdf_S},\bP_{\bdf_E}) & \leq \frac{1}{L} \left( \bE_{f\sim \bP_{\bdf_S}} err(\bP_{U},f)-\bE_{f\sim \bP_{\bdf_E}} err(\bP_{U},f) \right)
        \\&= \frac{1}{L} \left(  err(\bP_{U},\bdf_S)- err(\bP_{U},\bdf_E) \right) .
    \end{aligned}
    \end{equation}
    By rearranging the equation, we obtain that 
    \begin{equation}
        err(\bP_{U},\bdf_S)- err(\bP_{U},\bdf_E) \leq L \bW_1(\bP_{\bdf_S},\bP_{\bdf_E}).
    \end{equation}
    Combining the equations above, the result is established.
\end{proof}



\begin{theorem}
Given training data $D_n\in \mathcal{Z}^n$ sampled from the support distribution $\bP_S$, learning algorithm $\mathcal{A}$ and function space $\mathcal{F}$, if $err$ is $L$-bounded, then we have
\begin{equation*}
    \lv \bE_{D_n,f\sim \mathcal{A}(D_n)} [err(\bP_U,f)-err(D_n,f)] \rv  \leq GenIID+ L\Phi(I(\bdf_S;\bdT|\bP_{S}^{(\bdT)}))+\mathcal{O}\left( \epsilon \right),
\end{equation*}
where $\epsilon=\max \limits_{(a,b)\in E} \min \limits_{f \in \mathcal{F}} err(\bP_{a,b},f)$, $I(\cdot;\cdot)$ denotes the mutual information, $GenIID$ denote any generalization error bound with IID assumption and $\Phi(x)\triangleq \sqrt{\min\lbrace x/2,1-\exp(-x) \rbrace}$.
\end{theorem}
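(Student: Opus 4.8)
The plan is to treat this as a short extension of Lemma~\ref{lem:gen_bound}, which already yields the bound $GenIID + L\,\bW_1(\bP_{\bdf_S},\bP_{\bdf_E}) + \mathcal{O}(\epsilon)$; the only genuinely new work is (i) reconciling the ``$L$-bounded'' hypothesis in this statement with the ``$L$-Lipschitz'' hypothesis used by that lemma, and (ii) converting the Wasserstein term $\bW_1(\bP_{\bdf_S},\bP_{\bdf_E})$ into the information quantity $\Phi\big(I(\bdf_S;\bdT\mid\bP_S^{(\bdT)})\big)$.

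For step (i) I would re-run the argument of Lemma~\ref{lem:gen_bound} with $\cF$ equipped with the discrete metric of Definition~\ref{def:discrete_metric}. Because each $f\in\cF$ has nonnegative range, $err(\bP_U,\cdot)$ takes values in $[0,L]$ whenever $err$ is $L$-bounded, so $|err(\bP_U,f_1)-err(\bP_U,f_2)|\le L = L\cdot d(f_1,f_2)$ for $f_1\neq f_2$; that is, $err(\bP_U,\cdot)$ is $L$-Lipschitz with respect to the discrete metric, and the Kantorovich--Rubinstein step (Lemma~\ref{lem:dual_form_wass}) together with the rest of Lemma~\ref{lem:gen_bound} goes through unchanged. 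A convenient bonus is that the $1$-Wasserstein distance induced by the discrete metric is exactly the total variation, so the bound becomes $GenIID + L\,\tv(\bP_{\bdf_S},\bP_{\bdf_E}) + \mathcal{O}(\epsilon)$.

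It then remains to prove $\tv(\bP_{\bdf_S},\bP_{\bdf_E}) \le \Phi\big(I(\bdf_S;\bdT\mid\bP_S^{(\bdT)})\big)$. The two branches of $\Phi(x)=\sqrt{\min\{x/2,\,1-\exp(-x)\}}$ match two classical bounds of total variation by a Kullback--Leibler divergence: Pinsker's inequality gives $\tv\le\sqrt{\kl/2}$ and the Bretagnolle--Huber inequality gives $\tv\le\sqrt{1-\exp(-\kl)}$, so taking the smaller bound gives $\tv(\bP,\bQ)\le\Phi\big(\kl(\bP\|\bQ)\big)$ for all $\bP,\bQ$. I would then condition on the realized support distribution: by joint convexity of total variation, $\tv(\bP_{\bdf_S},\bP_{\bdf_E})$ is at most the expectation over $\bP_S^{(\bdT)}$ of the conditional total variations; I would apply the $\Phi$-of-$\kl$ bound inside that expectation, use concavity of $\Phi$ (both branches are concave on $[0,\infty)$, and $\Phi=\min\{\sqrt{x/2},\sqrt{1-\exp(-x)}\}$, so the minimum is concave too) to pull the expectation inside $\Phi$ via Jensen, and identify the resulting averaged Kullback--Leibler term with $I(\bdf_S;\bdT\mid\bP_S^{(\bdT)})$. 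This last identification is where the structure of the problem enters: under the consistency part of the standing assumptions $\bdf_E$ is, up to the $\mathcal{O}(\epsilon)$ slack already accounted for, a measurable function of $\bdT$ once $\bP_S^{(\bdT)}$ is fixed, so $\bP_{\bdf_E\mid\bP_S^{(\bdT)}}$ is precisely the $\bdT$-driven mixture against which the residual dependence of $\bdf_S$ on $\bdT$ is measured, which is what the conditional mutual information records.

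I expect step (ii) to be the main obstacle, and within it the delicate point is the clean identification of the averaged $\kl$ with $I(\bdf_S;\bdT\mid\bP_S^{(\bdT)})$ while correctly bookkeeping the joint law of $\bdf_S$, $\bdf_E$ and $\bdT$ and how the learning algorithm couples them through the support distribution; the discrete-metric reconciliation and the chaining through Lemma~\ref{lem:gen_bound} are routine, and any stray multiplicative constant produced by the Lipschitz estimate can be absorbed into the $L$ that already appears in the statement.
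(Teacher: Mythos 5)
Your proposal follows essentially the same route as the paper's own proof: equip $\cF$ with the discrete metric so that $L$-boundedness gives $L$-Lipschitzness, invoke Lemma~\ref{lem:gen_bound}, use that $\bW_1$ under the discrete metric equals total variation, bound total variation by $\Phi(\kl)$ via Pinsker and Bretagnolle--Huber, and identify the (averaged) Kullback--Leibler term with $I(\bdf_S;\bdT\mid\bP_S^{(\bdT)})$. Your extra bookkeeping in the last step (joint convexity of $\tv$, concavity of $\Phi$ plus Jensen before identifying the conditional mutual information) is a more explicit rendering of what the paper does informally in its Equation~\ref{eq:kl-mu}, so the argument is correct and not materially different.
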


\begin{proof}
    Start from Lemma \ref{lem:gen_bound}, we set the metric between the function space, i.e. $d_{\cF}$ , as the discrete metric as defined in Definition \ref{def:discrete_metric}. Based on this metric, because the $err(\cdot)$ is $L$-bounded, we have for any distribution $\bQ$ and $f_1,f_2 \in \cF$,  $\frac{\lv err(\bQ,f_1)-err(\bQ,f_2)\rv}{d_{\cF}(f_1,f_2)}\leq \frac{\lv L-0 \rv}{1}=L$, i.e. the $err(\cdot)$ is $L$-Lipschitz.

    Then, we can bound $\bW_1 (\bP_{\bdf_S},\bP_{\bdf_E})$ in Lemma \ref{lem:gen_bound} with $\Phi(I(\bdf_S;\bdT|\bP_{S}^{(\bdT)}))$:
    \begin{equation}
    \label{eq:wasserstein-KL}
        \bW_1 (\bP_{\bdf_S},\bP_{\bdf_E})=\bW_1 (\bP_{\bdf_E},\bP_{\bdf_S})\overset{(\clubsuit)}{=}\tv(\bP_{\bdf_E},\bP_{\bdf_S})\overset{(\heartsuit)}{\leq} \Phi(KL(\bP_{\bdf_E},\bP_{\bdf_S})),
    \end{equation}
    where $(\clubsuit)$ is due the Theorem 6.15 of \citet{villani2009optimal},$(\heartsuit)$ is due to the statement in Theorem 6.5 of \citet{polyanskiy2014lecture} and Lemma 2 of \citet{rodriguez2021tighter}. With some misuses, we denote $\bP_{\bdf_S}$ as $\bP_{\bdf}|\bP_{S}$, where $|$ denotes the condition and the same for $\bP_{\bdf_E}$ and $\bP_{\bdf_U}$. Then, we have
    
    \begin{equation}
    \label{eq:kl-mu}
        \begin{aligned}
            & \kl(\bP_{\bdf_E},\bP_{\bdf_S})=\kl(\bP_{\bdf}|\bP_{E},\bP_{\bdf}|\bP_{S})=\kl(\bP_{\bdf}|(\bP_{S},\bP_{U}),\bP_{\bdf}|\bP_{S})
            \\&=\kl(\bP_{\bdf}|\bP_{U},\bP_{\bdf} )|\bP_{S}=I(\bP_{\bdf};\bP_{U}|\bP_{S})=I(\bP_{\bdf};T|\bP_{S})=I(\bP_{\bdf_S};T|\bP_{S})
        \end{aligned}
    \end{equation}
    Then, taking Equation \ref{eq:wasserstein-KL} and Equation \ref{eq:kl-mu} back into Lemma \ref{lem:gen_bound}, the Theorem is estabilished.
\end{proof}

\subsection{Proof of Corollary \ref{cor:bound}}
\begin{corollary}
    Given the support distribution $\bP_S$, if $err$ is $L$-bounded, and $\lim_{n \to \infty} GenIID =0$,
    the CG problem is solvable if $err(\bP_S,\bdf_S)+\Phi(I(\bdf_S;\bdT|\bP_{S}^{(\bdT)}))=\mathcal{O}(\epsilon)$.
\end{corollary}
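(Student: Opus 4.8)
The plan is to derive Corollary~\ref{cor:bound} directly from Theorem~\ref{thm:information} by specializing to the asymptotic regime $n\to\infty$ and invoking Definition~\ref{def:solvability}. First I would take the inequality of Theorem~\ref{thm:information},
\begin{equation*}
    \lv \bE_{D_n,f\sim \mathcal{A}(D_n)} [err(\bP_U,f)-err(D_n,f)] \rv  \leq GenIID+ \kappa_n L\Phi(I(\bdf_S;\bdT|\bP_{S}^{(\bdT)}))+\mathcal{O}\left( \epsilon \right),
\end{equation*}
and let $n\to\infty$. By hypothesis $\lim_{n\to\infty}GenIID=0$, and by the remark following Theorem~\ref{thm:information} we have $\lim_{n\to\infty}\kappa_n=1$ (using $\lim_{n\to\infty}\bdf_{D_n}=\mathcal{A}(\bP_S)=\bdf_S$, so the numerator and denominator defining $\kappa_n$ coincide in the limit). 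Also $\lim_{n\to\infty}err(D_n,f)=err(\bP_S,f)$ and $\lim_{n\to\infty}\bdf_{D_n}=\bdf_S$, so the left-hand side converges to $\lv err(\bP_U,\bdf_S)-err(\bP_S,\bdf_S)\rv$. Taking the limit of both sides therefore yields
\begin{equation*}
    \lv err(\bP_U,\bdf_S)-err(\bP_S,\bdf_S)\rv \leq L\,\Phi\!\left(I(\bdf_S;\bdT|\bP_{S}^{(\bdT)})\right)+\mathcal{O}(\epsilon).
\end{equation*}

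Next I would turn this into a bound on $err(\bP_U,\bdf_S)$ alone via the triangle inequality: $err(\bP_U,\bdf_S)\leq err(\bP_S,\bdf_S)+L\,\Phi(I(\bdf_S;\bdT|\bP_{S}^{(\bdT)}))+\mathcal{O}(\epsilon)$. Now impose the hypothesis $err(\bP_S,\bdf_S)+\Phi(I(\bdf_S;\bdT|\bP_{S}^{(\bdT)}))=\mathcal{O}(\epsilon)$; since $L$ is a fixed constant, $L\,\Phi(\cdot)=\mathcal{O}(\epsilon)$ as well, and likewise $err(\bP_S,\bdf_S)=\mathcal{O}(\epsilon)$. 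Summing the $\mathcal{O}(\epsilon)$ terms gives $err(\bP_U,\bdf_S)=\mathcal{O}(\epsilon)$, which is precisely the condition in Definition~\ref{def:solvability} for the CG problem to be solvable (the witnessing learning algorithm being the given $\mathcal{A}$). This completes the argument.

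The main obstacle, such as it is, is not a deep one: it lies in justifying the interchange of limits and the convergence statements cleanly — in particular that $\bE_{D_n,f\sim\mathcal{A}(D_n)}[err(D_n,f)]\to err(\bP_S,\bdf_S)$ and that $err(\bP_U,\bdf_{D_n})\to err(\bP_U,\bdf_S)$ as $n\to\infty$, which rely on the continuity/consistency of the learning algorithm in the sense that $\mathcal{A}(D_n)\Rightarrow\mathcal{A}(\bP_S)$ used throughout the paper. One should also check that the $\mathcal{O}(\epsilon)$ term in Theorem~\ref{thm:information} does not secretly depend on $n$ in a way that blows up; given the statement treats it as a fixed $\mathcal{O}(\epsilon)$ this is immediate. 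Everything else is bookkeeping with the triangle inequality and the observation that absorbing a constant multiple $L$ into a $\mathcal{O}(\epsilon)$ term is harmless.
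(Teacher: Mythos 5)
Your proposal is correct and follows essentially the same route as the paper's own proof: take the bound of Theorem~\ref{thm:information}, pass to the limit $n\to\infty$ using $\lim_{n\to\infty}GenIID=0$ and $\lim_{n\to\infty}\kappa_n=1$, deduce $err(\bP_U,\bdf_S)\leq err(\bP_S,\bdf_S)+L\Phi(I(\bdf_S;\bdT|\bP_{S}^{(\bdT)}))+\mathcal{O}(\epsilon)$, and absorb the hypothesis into $\mathcal{O}(\epsilon)$ to invoke Definition~\ref{def:solvability}. If anything, your careful use of an inequality where the paper somewhat loosely writes equalities in the limiting step is the cleaner presentation.
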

\begin{proof}
    Recall that the generalization bound in Theorem \ref{thm:information}, that
    \begin{equation}
    \label{eq:corr_tem_1}
        \lv \bE_{D_n,f\sim \mathcal{A}(D_n)} [err(\bP_U,f)-err(D_n,f)] \rv  \leq GenIID+ \kappa_n L\Phi(I(\bdf_S;\bdT|\bP_{S}^{(\bdT)}))+\mathcal{O}\left( \epsilon \right).
    \end{equation}
    According to the Definition \ref{def:solvability}, the CG is solvable if 
    
    \begin{equation}
    \begin{aligned}
        &\lim_{n\to \infty} \lv \bE_{D_n,f\sim \mathcal{A}(D_n)} [err(\bP_U,f)-err(D_n,f)] \rv \\&= 
         \lv  [err(\bP_U,\bdf_S)-err(\bP_S,\bdf_S)] \rv
        \\&\overset{(\square)}{=} \lim_{n \to \infty} (GenIID+ \kappa_n L\Phi(I(\bdf_S;\bdT|\bP_{S}^{(\bdT)}))+\mathcal{O}\left( \epsilon \right))
        \\&\overset{(\star)}{=} \lim_{n \to \infty}  L\Phi(I(\bdf_S;\bdT|\bP_{S}^{(\bdT)}))+\mathcal{O}\left( \epsilon \right)
        \\&=L\Phi(I(\bdf_S;\bdT|\bP_{S}^{(\bdT)})+\mathcal{O}\left( \epsilon \right),
    \end{aligned}
    \end{equation}
    where $(\square)$ is due to Equation \ref{eq:corr_tem_1}, and $(\star)$ is due to the condition $\lim_{n \to \infty} GenIID = 0$ and $\lim_{n \to \infty} \kappa_n=1$.

    Then, we have
    \begin{equation}
        err(\bP_U,\bdf_S) \leq err(\bP_S,\bdf_S)+L\Phi(I(\bdf_S;\bdT|\bP_{S}^{(\bdT)})+\mathcal{O}\left( \epsilon \right).
    \end{equation}
    Under the condition that $err(\bP_S,\bdf_S)+\Phi(I(\bdf_S;\bdT|\bP_{S}^{(\bdT)}))=\mathcal{O}(\epsilon)$,
    we have 
\begin{equation}
    err(\bP_U,\bdf_S)=\mathcal{O}(\epsilon).
    \end{equation}
    According to the definition \ref{def:solvability}, the problem is solvable. Therefore, the Corollary holds.
\end{proof}

\subsection{Explore the tightness of the generalization bounds}
\label{app_sec:tightness}

\begin{figure}[h]
    \centering
    \includegraphics[width=0.8\linewidth]{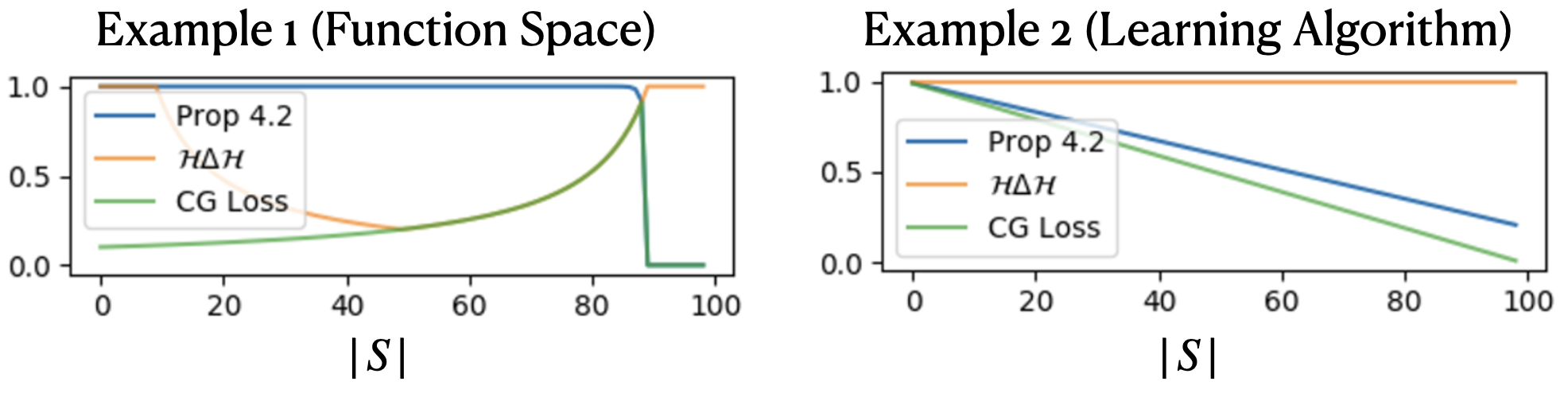}
    \caption{\textbf{Generalization bounds on the toy problem.} The example 1 considers the case where the function space has some bias while the learning algorithm has no bias. The example 2 consider the learning algorithm has certain bias while the function space is powerful to fit data. We find that 1) our bounds can capture the decrease of generalization error in example 1 and 2) our can align with the generalization error in example 2.}
    \label{fig:bounds_compare}
\end{figure}

Our ability to assert whether our bound is tighter or looser than previous bounds is contingent upon considering the nuanced intricacies of the problems at hand. According to whether the problem is more influenced by the design of learning algorithm or the function space. We have delineated the issue into two distinct categories 
\begin{itemize}
    \item \textbf{Function space dominated problem.} In this problem, we posit that the learning algorithm randomly selects functions that minimize loss on the training data, within a function space tailored specifically for the problem at hand. 
    \item \textbf{learning algorithm dominated problem.} Here, we assume that the function space is pwerful enough to accommodate any distribution, while the learning algorithm is inclined to favor certain functions over others, provided they minimize loss on the training data.
\end{itemize}

To illustrate these concepts, we provide two examples: Example 1 exemplifies a scenario where the function space dominates, whereas Example 2 exemplifies a scenario where the learning algorithm holds sway.
In summation, our analysis yields the following conclusion: \textit{We ascertain that our bound achieves greater tightness in the context of the learning algorithm dominated problem. Conversely, in the scenario where the function space dominates, our bound achieves greater tightness solely when provided with an extensive array of supporting distributions.}

\paragraph{Example setting} We consider thet $|A|=|B|=10$. The two examples are explored. \textbf{Example 1} The function space $\cF$ has the properties that 1) For each function $f\in \cF$, we have $\sum_{a,b} \mathbb{I}_{err(\bP_{a,b},f)}=10$ or $\sum_{a,b} \mathbb{I}_{err(\bP_{a,b},f)}=0$. 2) $\forall a \in A,b \in B, err(\bP_{a,b},f) =1$ or $ err(\bP_{a,b},f) =0$. The learning algorithm satisfies that $\forall (a,b) \in S$ and for all $f \in \supp \bP_{\bdf_S}$ and for all $(a,b) \in S$, we have $ err(\bP_{a,b},f)=0 $. \textbf{Example 2} For all $f \in \supp \bP_{\bdf_S}$, for all $(a,b) \in S$, we have $ err(\bP_{a,b},f)=0$ and for all $(a,b) \notin S$, we have $ err(\bP_{a,b},f)=0$ with probability $c_{a,b}\frac{|S|}{|E|}$ else $ err(\bP_{a,b},f)=1$, where $c_{a,b}$ is a ramdonly assigned value for each $(a,b)$ and it takes value between 0.8 and 1. We choose the distance measure $d_{\cF}(f_1,f_2)=\sup_{(a,b)\in E}\bE(\lv err(\bP_{a,b},f_1)-err(\bP_{a,b},f_2)\rv)$.

\begin{remark}
    In \textbf{Example 1}, we delve into the bias stemming from the function space. Here, the function space is relatively constrained, containing only a limited set of functions, including the correct one that attains zero loss. The learning algorithm uniformly selects a function only if it achieves minimal loss on the support distribution. Consequently, the learning algorithm exhibits no inherent bias towards specific functions as long as they achieve minimal loss on the support distributions. In \textbf{Example 2}, we explore the bias inherent in the learning algorithm. In this instance, the function space is expansive, encompassing all possible outputs. However, the learning algorithm may assign varying probabilities to functions that achieve zero loss on the support distribution.
\end{remark}

We revisit the results of \citet{ben2010theory}, which is the work that we aim to compare with. \citet{ben2010theory} has the following conclusion that:
\begin{equation}
    err(\bP_{U},f)-err(\bP_{S},f)\leq d_{\cF\Delta\cF}(\bP_{U},\bP_{S})+\mathcal{O}(\epsilon),
\end{equation}
where the $d_{\cF\Delta\cF}$ is defined as
 $d_{\cF\Delta\cF}  \triangleq 2 \sup_{f,f'\in \cF} \lv \bE_{z\sim \bP_{U}} [f(x) \neq f'(x)] -\bE_{z\sim \bP_{S}} [f(x) \neq f'(x)] \rv$.

\paragraph{Results} We compute the generalization bounds and error depicted in Fig.~\ref{fig:bounds_compare}, revealing two key observations: Our bound effectively incorporates the impact of the support distribution. In \textbf{Example 1}, our generalization bound accurately reflects the decreasing trend of the generalization error. Similarly, in \textbf{Example 2}, our bound aligns with the generalization trends across various support distributions. Our bound accounts for the influence of the learning algorithm. Notably, in \textbf{Example 2}, the approach proposed by \citet{ben2010theory} fails to capture the dynamics accurately. This failure can be attributed to its predominant focus on the function space influence, whereas our analysis recognizes the dominance of the learning algorithm's influence in this example.

\subsection{Discussion of Prior Works}
\begin{table}[h]
    \label{tab:additional_contraint_related_work}
    \centering
    \begin{tabular}{c|c|c}
    \hline
         & $\Phi^{(x)}$  & $\Phi^{(y)}$\\
         \hline
        \citet{dong2022first} & $\Phi_1^{(x)}(a,\zeta)=(\Tilde{\Phi}_1^{(x)}(a,\zeta),\boldsymbol{0}) \quad \Tilde{\Phi}_1^{(x)}(b,\zeta)=(\boldsymbol{0},\Tilde{\Phi}_1^{(x)}(b,\zeta))$ & No \\
        \citet{wiedemer2024compositional} & $\Phi^{(x)}_1 (a,\zeta)+\Phi^{(x)}_1 (a,\zeta)=(a,b,\zeta)$ & No \\
        \hline
    \end{tabular}
    \caption{The additional contraint imposed by the prior works upon the Equation \ref{eq:IRM_gen}. The term ``No" in the Table indicates that there are no additions constraint. The $\boldsymbol{0}$ denotes the vector whose every entry is zero.}
    \label{tab:my_label}
\end{table}

\begin{definition}
    \label{def:gen_func}
    (Generation Function) 
    The function $\Phi(\cdot)$ is the generation function for a CG distribution defined in Definition \ref{def:cg}, if for any $(a,b)\in E$ such that the random variable $\bdz_{a,b}\sim \bP_{a,b}$ can be generated according to the generation function $\bdz_{a,b} =\Phi (a,b,\bdzeta)$, where the random variable $\bdzeta \sim \bP_{\bdzeta}$ to denote any other concepts and the noise.
\end{definition}

In the following, we discuss the relation between the generation function and the IRM:

\begin{proposition}
    For any CG distribution defined according Definition \ref{def:cg} whose generation function $\Phi(\cdot)$ has the following form:
    \begin{equation}
\begin{aligned}
    \label{eq:IRM_gen}
    x&=\Phi_1^{(x)}(a,\zeta)+\Phi_2^{(x)}(b,\zeta) \\
    y&=\Phi_1^{(y)}(a,\zeta)+\Phi_2^{(y)}(b,\zeta),
\end{aligned}
\end{equation}
where $z=(x,y)$. Then the distribution has IRM.
\end{proposition}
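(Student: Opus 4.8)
The plan is to show directly that the decomposition structure of the generation function in Equation~\ref{eq:IRM_gen} forces the compositional rule $T$ to factor as required by Definition~\ref{def:IRM}. First I would fix arbitrary $(a_1,b_1),(a_2,b_2)\in E$ and a sample $z=(x,y)\in\supp\bP_{a_1,b_1}$, so that by Definition~\ref{def:gen_func} there is some $\zeta$ with $x=\Phi_1^{(x)}(a_1,\zeta)+\Phi_2^{(x)}(b_1,\zeta)$ and $y=\Phi_1^{(y)}(a_1,\zeta)+\Phi_2^{(y)}(b_1,\zeta)$. The key observation is that the map $T_{a_1\to a_2,b_1\to b_2}$ from Definition~\ref{def:cg}(2) must send this $z$ to the point of $\supp\bP_{a_2,b_2}$ generated from the \emph{same} latent $\zeta$; I would argue this using Definition~\ref{def:cg}(1) (disjoint supports pin down which $(a,b)$ a point came from) together with the bijectivity and density-preserving property of $T$, so that $T_{a_1\to a_2,b_1\to b_2}(z)=\big(\Phi_1^{(x)}(a_2,\zeta)+\Phi_2^{(x)}(b_2,\zeta),\,\Phi_1^{(y)}(a_2,\zeta)+\Phi_2^{(y)}(b_2,\zeta)\big)$.

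Next I would define the candidate single-factor maps: $T_{a_1\to a_2}$ should act on a point generated from $(a_1,b)$ with latent $\zeta$ by replacing the $a$-dependent summands $\Phi_1^{(x)}(a_1,\zeta),\Phi_1^{(y)}(a_1,\zeta)$ with $\Phi_1^{(x)}(a_2,\zeta),\Phi_1^{(y)}(a_2,\zeta)$ and leaving the $b$-summands untouched; symmetrically for $T_{b_1\to b_2}$. One must check these are well defined (independent of the representation, again via the disjoint-support argument) and are measurable bijections — here I would invoke the same structural facts used to establish the $T$'s in Definition~\ref{def:cg}(2), since $\Phi$ restricted to the slice with $a,b$ fixed is the generating map of a single subdistribution. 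With these definitions in hand, the additive split in Equation~\ref{eq:IRM_gen} makes the computation immediate: applying $T_{a_1\to a_2}$ then $T_{b_1\to b_2}$ swaps the $a$-summands and then the $b$-summands, landing on exactly the point with parameters $(a_2,b_2)$ and latent $\zeta$, which is $T_{a_1\to a_2,b_1\to b_2}(z)$; and the order of the two swaps does not matter because they touch disjoint additive terms. Hence $T_{a_1\to a_2,b_1\to b_2}=T_{a_1\to a_2}\circ T_{b_1\to b_2}=T_{b_1\to b_2}\circ T_{a_1\to a_2}$, which is precisely the IRM condition of Definition~\ref{def:IRM}.

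The main obstacle I anticipate is the well-definedness of $T_{a_1\to a_2}$ and $T_{b_1\to b_2}$: the generation function $\Phi$ need not be injective in $\zeta$, so a single point $x=(x,y)$ may arise from several latents $\zeta,\zeta'$, and one has to verify that replacing the $a$-block gives the same output regardless of which preimage is used. I would handle this by leveraging that the full map $T_{a_1\to a_2,b_1\to b_2}$ is already known to be a well-defined bijection on supports (Definition~\ref{def:cg}(2)), and that the $b$-only change $T_{b_1\to b_2}$ is just the restriction of an analogous well-defined rule for the sub-family $\{\bP_{a,b}\}_{b\in B}$ at fixed $a$; composing a well-defined $a$-slice rule with a well-defined $b$-slice rule and matching the result against the known $T_{a_1\to a_2,b_1\to b_2}$ via the disjoint-support identification resolves the ambiguity. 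A secondary subtlety is whether the decomposition is required for \emph{all} pairs in $E$ rather than only in $S$; since Equation~\ref{eq:IRM_gen} is assumed to hold globally, the argument above applies verbatim to every $(a_1,b_1),(a_2,b_2)\in E$, matching the quantifier in Definition~\ref{def:IRM}.
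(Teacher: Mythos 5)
Your core computation is the same as the paper's: under the additive form of Equation \ref{eq:IRM_gen}, changing $a$ replaces the $\Phi_1$-summands and changing $b$ replaces the $\Phi_2$-summands with the latent $\zeta$ held fixed, the two single-factor maps commute because they touch distinct summands, and their composite is the full rule $T_{a_1 \to a_2, b_1 \to b_2}$; this is exactly the calculation in Equation \ref{eq:decom_CR_IRM} (in fact your bookkeeping, where the $a$-swap affects both coordinates through $\Phi_1^{(x)},\Phi_1^{(y)}$ and the $b$-swap through $\Phi_2^{(x)},\Phi_2^{(y)}$, is the faithful reading of Equation \ref{eq:IRM_gen}; the paper's displayed formula drops some of these terms). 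The one place where your argument has a genuine flaw is the opening ``key observation'': you claim that the bijection $T_{a_1\to a_2,b_1\to b_2}$ from Definition \ref{def:cg}(2) \emph{must} send $z$ to the point of $\supp \bP_{a_2,b_2}$ generated from the same latent $\zeta$, and that this follows from disjoint supports plus the density-preserving bijection property. That inference does not hold: Definition \ref{def:cg}(2) only asserts the \emph{existence} of some measure-preserving bijection, and such bijections are highly non-unique (composing the $\zeta$-preserving map with any measure-preserving symmetry of $\bP_{a_2,b_2}$ gives another valid rule), so support disjointness pins down the pair $(a,b)$ a point came from but says nothing about which latent it carries. The paper avoids this entirely by \emph{constructing} the compositional rule induced by the generation function and showing that this rule decomposes, which is all Definition \ref{def:IRM} asks for; your proof goes through once you read it the same way and delete the identification claim rather than trying to prove it. Relatedly, your concern about well-definedness when $\Phi$ is not injective in $\zeta$ is legitimate (the paper also glosses over it, writing $T_{a_1\to a_2}(z)$ as an additive shift in which $\zeta$ is implicitly a function of $z$), but your proposed resolution --- matching the swap maps against the ``known'' $T_{a_1\to a_2,b_1\to b_2}$ --- is circular for the same non-uniqueness reason; a clean fix is to define the single-factor rules at the level of latents (or assume the generation map is invertible on each support) and push them forward, which is implicitly what the paper's construction does.
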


\begin{remark}
    In Table \ref{tab:additional_contraint_related_work}, we list the additional constraint for the other related work in additional to the Equation \ref{eq:IRM_gen}.
\end{remark}

\begin{proof}
    According to the Defintion \ref{def:gen_func}, we have $z=\Phi(a,b,\zeta)$. Then, according to the Equation \ref{eq:IRM_gen}, we can decomposite the generation function into the following form:
    \begin{equation}
        z=(x,y)=(\Phi_1^{(x)}(a,\zeta)+\Phi_2^{(x)}(b,\zeta),\Phi_1^{(y)}(a,\zeta)+\Phi_2^{(y)}(b,\zeta))=\Phi(a,b,\zeta).
    \end{equation}
With this generation process, the corresponding composition rule can be calculated as:
\begin{equation}
\label{eq:decom_CR_IRM}
\begin{aligned}
    T_{a_1 \to a_2}(z)&=z+(\Phi_1^{(x)}(a_2,\zeta)-\Phi_1^{(x)}(a_1,\zeta),\boldsymbol{0})\\
    T_{b_1 \to b_2}(z)&=z+(\boldsymbol{0},\Phi_1^{(y)}(b_2,\zeta)-\Phi_1^{(y)}(b_1,\zeta)).
\end{aligned}
\end{equation}
It is obviously that the Equation \ref{eq:decom_CR_IRM} holds for any $(a_1,b_1), (a_2,b_2) \in E$. Therefore, we have 
\begin{equation}
\begin{aligned}
    T_{a_1 \to a_2, b_1 \to b_2}&=T_{a_1 \to a_2}\circ T_{b_1 \to b_2}=T_{b_1 \to b_2}\circ T_{a_1 \to a_2}
    \\&= z+(\Phi_1^{(x)}(a_2,\zeta)-\Phi_1^{(x)}(a_1,\zeta),\Phi_1^{(y)}(b_2,\zeta)-\Phi_1^{(y)}(b_1,\zeta))
\end{aligned}
\end{equation}
holds for any $(a_1,b_1), (a_2,b_2) \in E$.
Accroding to Definition \ref{def:IRM}, we can obtain that the distribution has IRM.

\end{proof}

\end{document}